\def\eqref#1{equation~\ref{#1}}
\def\1{\bm{1}}
\def\vmu{{\bm{\mu}}}
\def\vtheta{{\bm{\theta}}}
\def\vc{{\bm{c}}}
\def\ve{{\bm{e}}}
\def\vw{{\bm{w}}}
\def\vx{{\bm{x}}}
\def\vy{{\bm{y}}}
\def\evw{{w}}
\def\mC{{\bm{C}}}
\def\mL{{\bm{L}}}
\DeclareMathAlphabet{\mathsfit}{\encodingdefault}{\sfdefault}{m}{sl}
\SetMathAlphabet{\mathsfit}{bold}{\encodingdefault}{\sfdefault}{bx}{n}
\def\sA{{\mathbb{A}}}
\def\sL{{\mathbb{L}}}
\def\sN{{\mathbb{N}}}
\def\sS{{\mathbb{S}}}
\def\sT{{\mathbb{T}}}
\def\sY{{\mathbb{Y}}}
\newcommand{\E}{\mathbb{E}}
\DeclareMathOperator*{\argmax}{arg\,max}
\DeclareMathOperator*{\argmin}{arg\,min}
\def\BibTeX{{\rm B\kern-.05em{\sc i\kern-.025em b}\kern-.08em
    T\kern-.1667em\lower.7ex\hbox{E}\kern-.125emX}}
\newtheorem{proposition}{Proposition}
\newtheorem{lemma}{Lemma}
\newtheorem{example}{Example}
\title{The Double-Edged Sword of Behavioral Responses in Strategic Classification: Theory and User Studies}
\author{
 Raman Ebrahimi \\
  ECE, UC San Diego\\
  \texttt{raman@ucsd.edu} \\
   \And
 Kristen Vaccaro \\
  CSE, UC San Diego\\
  \texttt{kv@ucsd.edu} \\
  \And
 Parinaz Naghizadeh \\
  ECE, UC San Diego\\
  \texttt{parinaz@ucsd.edu} \\
}
\newcommand{\com}[1]{{\color{red}\textbf{Parinaz's Comment}: #1}}
\newcommand{\comr}[1]{{\color{orange}\textbf{Raman's Comment}: #1}}
\newcommand{\resp}[1]{{\color{cyan}\textbf{Response}: #1}} 
\newcommand{\com}[1]{}
\newcommand{\comr}[1]{}
\newcommand{\resp}[1]{}
\begin{document}
\maketitle
\begin{abstract}
  When humans are subject to an algorithmic decision system, they can strategically adjust their behavior accordingly (``game'' the system). 
    While a growing line of literature on strategic classification has used game-theoretic modeling to understand and mitigate such gaming, these existing works consider standard models of \emph{fully rational} agents.   
    In this paper, we propose a strategic classification model that considers \emph{behavioral biases} in human responses to algorithms. We show how misperceptions of a classifier (specifically, of its feature weights) can lead to different types of discrepancies between biased and rational agents' responses, and identify when behavioral agents over- or under-invest in different features. We also show that strategic agents with behavioral biases can benefit or (perhaps, unexpectedly) harm the firm compared to fully rational strategic agents. We complement our analytical results with user studies, which support our hypothesis of behavioral biases in human responses to the algorithm. Together, our findings highlight the need to account for human (cognitive) biases when designing AI systems, and providing explanations of them, to strategic human in the loop. 
\end{abstract}

\section{Introduction}

As machine learning systems become more widely deployed, including in settings such as resume screening, hiring, lending, and recommendation systems, people have begun to respond to them strategically. Often, this takes the form of ``gaming the system'' or using an algorithmic system's rules and procedures to manipulate it and achieve desired outcomes. Examples include Uber drivers coordinating the times they log on and off the app to impact its surge pricing algorithm \citep{mohlmann2017hands}, and Twitter \citep{burrell2019users} and Facebook \citep{eslami2016first} users' decisions regarding how to interact with content given the platforms' curation algorithms. 

Game theoretical modeling and analysis have been used in recent years to formally analyze such strategic responses of humans to algorithms (e.g., \cite{Hardt2016strategic, Milli2019socialcost, Liu2020disparateequilibria}; see also Related Work). However, these existing works assume \emph{standard} models of decision making, where agents are fully rational when responding to algorithms; yet, humans exhibit different forms of cognitive biases in decision making \citep{kahnemann1979prospect}. Motivated by this, we explore the impacts \emph{behavioral biases} on agents' strategic responses to algorithms. 

We begin by proposing an extension of existing models of strategic classification to account for behavioral biases. Specifically, our model accounts for agents misperceiving (i.e., over-weighing or under-weighing) the importance of different features in determining the classifier's output. These may be known to agents in a full information game or can become available to them when the firm offers explanations through an Explainable AI (XAI) method which provides information about feature importance/contribution in the algorithm (e.g. SHAP~\citep{lundberg2017shap} or LIME~\citep{ribeiro2016lime}). We use this model to identify different forms of discrepancies that can arise between behavioral and fully rational agents' responses (Lemmas~\ref{lemma:band-optimization}-\ref{lemma:manhattan-cost-band}). We further identify conditions under which agents' behavioral biases lead them to over- or under-invest in specific features (Proposition~\ref{prop:under-invest-high-dim}). Moreover, we show that a firm's utility could increase or decrease when agents are behaviorally biased, compared to when they are fully rational (Proposition~\ref{prop:mismatch-actual-b}). While the former may be intuitively expected (behaviorally biased agents are less adept at gaming algorithms), the latter is more surprising; we further provide an intuitive explanation for this through a numerical example (Example~\ref{ex:firm-benefit-hurt}), highlighting the impact of agents' qualification states in determining the ultimate impact of agents' behavioral biases on the firm. 

Finally, by conducting a user study, we show that this type of behavioral bias is present when individuals interact with an AI decision assistant. Our study shows that individuals tend to underestimate the importance of the most crucial feature while overestimating the importance of the least important one. We also find that increasing the complexity of the model, either by adding more features or having unbalanced feature weights, amplifies this bias. Additionally, we observe other forms of cognitive biases (not captured by probability weighting biases), such as some individuals disproportionately investing in a feature with a lower starting point when feature weights are similar.

Together, our theoretical findings and user studies highlight the necessity of accounting for not just strategic responses but also cognitive biases when designing AI systems with human in the loop. 

\textbf{Summary of contributions.}

\textbullet\, We extend existing models of strategic classification to account for agents' cognitive biases in perceiving feature importance. 

\textbullet\, We analyze how these biases lead to over- or under-investment in certain features compared to fully rational agents. We further show that behaviorally biased agents can increase or decrease firm utility. 

\textbullet\, Through a user study, we confirm that cognitive biases influence human's understanding of and responses to AI systems, especially when they are given (explanations of) models with unbalanced feature weights and a higher number of features. 

\textbf{Related Work.} Our work is closely related to the literature on analyzing agents' responses to machine learning algorithms, when agents have full \citep{Hardt2016strategic, Perdomo2020performative, Milli2019socialcost, Hu2019disparate, Liu2020disparateequilibria, bechavod2022information, kleinberg202induce, alhanouti2024could, pmlr-v162-zhang22l, bechavod2021gaming} or partial information \citep{harris2022bayesian,cohen2024bayesian} about the algorithm, or principal's strategy \citep{haghtalab2023calibratedstackelberggameslearning}. While our base model of agents' strategic responses to (threshold) classifiers has similarities to those in some of these works (e.g., \cite{Hu2019disparate, Liu2020disparateequilibria}), we differ in our modeling of agent's \emph{behavioral} responses as opposed to fully \emph{rational} (non-behavioral) best responses considered in these works. 

The necessity of accounting for human biases in making AI assisted decisions \citep{rastogi2022deciding, nourani2021anchoring}, and various aspects of decision-making and model design \citep{Morewedge2023bias, zhu2024capturingcomplexityhumanstrategic, liu2024largelanguagemodelsassume,heidari2021perceptions, ethayarajh2024ktomodelalignmentprospect}has been considered in recent work. Among these, \cite{heidari2021perceptions} uses probability weighting functions to model human perceptions of allocation policies. We also consider (Prelec) weighting functions, but to highlight special cases of our results. We also differ from all these existing works in our focus on the \emph{strategic classification} problem.

Broadly, our research is also related to the area of explainable machine learning. While explanations can be helpful in increasing accountability, there is debate about the efficacy of existing explainability methods in providing correct and sufficient details in a way that helps users understand and act around these systems \citep{doshivelez2019accountabilityailawrole, kumar2020shapproblem, lakkaraju2020fool, adebayo2018sanity}. To complement these discussions, our work provides a formal model of how agents' behavioral biases may shape their responses to explanations (of feature importance) provided to them. We further confirm the presence of behavioral biases through a user study. Previous works have utilized user studies to assess interpretable models based on factors such as time spent, number of words, and accuracy \citep{lakkaraju2016decisionsets}, to establish the core principles of interpretability goals \citet{hong2020humanfactors}, and to assess the impact of model interpretability on predicting model outputs \citep{poursabzi2021manipulating}. In contrast, we assess how behavioral biases can result in human subjects' sub-optimal responses to interpretable models. We review additional related work in Appendix~\ref{sec:app-lit-review}.  

\section{Model and Preliminaries}\label{sec:model}
\textbf{Strategic Classification.} We consider an environment in which a \emph{firm} makes binary classification decisions on \emph{agents} with (observable) features $\mathbf{x}\in\mathbb{R}^n$ and (unobservable) true qualification states/labels $y\in\{0,1\}$, where label $y=1$ (resp. $y=0$) denotes qualified (resp. unqualified) agents. The firm uses a threshold classifier $h(\vx, (\vtheta, \theta_0))=\mathbf{1}(\vtheta^T\vx\geq \theta_0)$ to classify agents, where $\mathbf{1}(\cdot)$ denotes the indicator function, and $\vtheta=[\theta_1, \theta_2, \ldots, \theta_n]^T$ denotes the \emph{feature weights}; we assume features are normalized so that $\sum_i \theta_i=1$. 

Agents are strategic, in that they can respond to (``game'') this classifier. (As an example, in a college admission setting where grades are used to make admission decisions, students can study or cheat to improve their grades.) Formally, an agent with \emph{pre-strategic} features $\vx_0$ best-responds to classifier $(\vtheta, \theta_0)$ to arrive at the \emph{(non-behavioral) post-strategic} features $\vx_{\text{NB}}$ by solving the optimization problem:
\begin{align}\label{eq:agent-optimization}
    &\vx_\text{NB} := \argmax_\vx ~ rh(\vx, (\vtheta, \theta_0))-c(\vx, \vx_0) \notag\\
    &\text{subject to}\quad c(\vx, \vx_0)\le B
\end{align}
where $r>0$ is the reward of positive classification, $c(\vx, \vx_0)$ is the cost of changing feature vector $\vx_0$ to $\vx$, and $B$ is the agent's budget. We consider three different cost functions: \emph{norm-2 cost} (with $c(\vx, \vx_0) = \norm{\vx-\vx_0}_2^2=\sum_i (x_i-x_{i,0})^2$), \emph{quadratic cost} {(with $c(\vx, \vx_0) = \sum_i c_i(x_{i}-x_{0,i})^2$)}, and \emph{weighted Manhattan (taxicab) distance cost} (with $c(\vx, \vx_0)=\vc^T|\vx-\vx_0|=\sum_i c_i(|x_{i}-x_{0,i}|)$). Our analytical results are presented for the \emph{norm-2 cost}. We also characterize the agent's best-responses under other cost functions to highlight that similar agent behavior can be seen under them. 

Anticipating the agents' responses, the firm can choose the optimal (non-behavioral) classifier threshold by solving 
$(\vtheta_\text{NB}, \theta_{0, \text{NB}}) := \argmin_{(\vtheta, \theta_0)} \E_{\vx\sim\mathcal{D}(\vtheta, \theta_0)}[l(\vx, (\vtheta, \theta_0))]$, where $\mathcal{D}(\vtheta, \theta_0)$ is the post-strategic feature distribution of agents responding to classifier $(\vtheta, \theta_0)$, and $l(\cdot)$ is the firm's loss function (e.g., weighted sum of TP and FP costs). 

\textbf{Behavioral Responses.} We extend the strategic classification model to allow for behavioral responses by agents. Formally, recall that we normalize the feature weight vector $\vtheta=[\theta_1, \theta_2, \ldots, \theta_n]^T$ to have $\sum_i \theta_i=1$. We interpret it as a probability vector, and assume that behaviorally biased agents misperceive $\vtheta$ as $\vw(\vtheta)$, where $\vw(\cdot)$ is a function capturing their biases. One choice for $\vw(\cdot)$ can be $\evw_j(\vtheta) = p(\sum_{i=1}^j \theta_i)-p(\sum_{i=1}^{j-1} \theta_i)$~\citep{gonzalez1999shape} where $p(z)=\exp(-(-\ln(z))^\gamma)$ is the widely used probability weighting function introduced by \citet{Prelec1998} with $\gamma$ reflecting the intensity of biases.

Now, a behaviorally biased agent with {pre-strategic} features $\vx_0$ best-responds to classifier $(\vtheta, \theta_0)$ to arrive at the \emph{behavioral post-strategic} features $\vx_{\text{B}}$ by solving:
\begin{align}\label{eq:agent-optimization-behavioral}
    &\vx_\text{B} := \argmax_\vx ~ rh(\vx, (\vw(\vtheta), \theta_0))-c(\vx, \vx_0)\notag\\
    &\text{subject to} \quad c(\vx, \vx_0)\le B
\end{align}
Note that the agent now responds to a \emph{perceived feature weights} $(\vw(\vtheta), \theta_0)$. In return, while always accounting for agents' strategic behavior (``gaming''), we assume the firm may or may not be aware that agents have behavioral biases when gaming the system. Specifically, let $\sL(\vtheta', (\vtheta, \theta_{0})):= \E_{\vx\sim\mathcal{D}(\vtheta', \theta_{0})}[l(\vx, (\vtheta, \theta_{0}))]$ denote a firm's expected loss when it implements a classifier $(\vtheta, \theta_{0})$ and agents respond to a (potentially different) classifier $(\vtheta', \theta_{0})$. Then, if a firm is aware of strategic agents' behavioral biases, it selects the threshold $(\vtheta_\text{B}, \theta_{0, \text{B}}) := \arg\min_{(\vtheta, \theta_0)} \sL(\vw(\vtheta), (\vtheta, \theta_0))$ and incurs a loss $\sL(\vw(\vtheta_{B}), (\vtheta_{B}, \theta_{0, \text{B}}))$. On the other hand, a firm that assumes agents are fully rational selects the threshold classifier $(\vtheta_\text{NB}, \theta_{0, \text{NB}})$, yet incurs the loss $\sL(\vw(\vtheta_\text{NB}), (\vtheta_\text{NB}, \theta_{0, \text{NB}}))$. 

\section{Agents' Strategic Responses}\label{sec:agetns-response}

We first fix the classifier $(\vtheta, \theta_0)$, and compare fully rational (non-behavioral) and behavioral agents' strategic responses to it. The following Lemma characterizes $\vx_\text{NB}$ (the solution to \eqref{eq:agent-optimization}) and $\vx_\text{B}$ (the solution to \eqref{eq:agent-optimization-behavioral}) under the norm-2 cost.
\begin{lemma}\label{lemma:band-optimization}
    Let $d(\vx_0, \vtheta, \theta_0)=\frac{\theta_0-\vtheta^T\vx_0}{\norm{\vtheta}_2}$  denote $\vx_0$'s distance to the hyperplane $\vtheta^T\vx=\theta_0$. Then, for an agent with starting feature vector $\vx_0$, if $0 < d(\vx_0, \vtheta, \theta_{0}) \le B$, 
    \begin{align*}
        \vx_\text{NB} = \vx_0 + d(\vx_0, \vtheta, \theta_{0})\vtheta~.
    \end{align*}
    Otherwise, $\vx_\text{NB} = \vx_0$. For behaviorally biased agents, $\vx_{B}$ is obtained similarly by replacing $\vtheta$ with $\vw(\vtheta)$.
\end{lemma}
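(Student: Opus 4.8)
The plan is to solve the constrained optimization in \eqref{eq:agent-optimization} directly, exploiting the fact that the reward term $rh(\vx,(\vtheta,\theta_0))$ takes only two values. First I would observe that the objective is $rh(\vx,(\vtheta,\theta_0)) - c(\vx,\vx_0)$ with $c$ the squared norm-2 cost, so any candidate optimizer is either the ``do nothing'' point $\vx_0$ (which achieves objective value $rh(\vx_0,\cdot) - 0$), or a point $\vx$ with $\vtheta^T\vx \ge \theta_0$ (earning reward $r$) that minimizes $\norm{\vx-\vx_0}_2^2$ subject to crossing the halfspace and to the budget constraint $\norm{\vx-\vx_0}_2^2 \le B$. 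So the real work is: among points in the halfspace $\{\vx : \vtheta^T\vx \ge \theta_0\}$, find the Euclidean projection of $\vx_0$, then check feasibility against the budget, then compare the two candidate objective values.

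Second, I would compute that projection. If $\vx_0$ already lies in the halfspace ($d(\vx_0,\vtheta,\theta_0)\le 0$), the projection is $\vx_0$ itself and no movement is needed, so $\vx_\text{NB}=\vx_0$. Otherwise ($d>0$), the projection onto the hyperplane $\vtheta^T\vx=\theta_0$ is the standard formula $\vx_0 + \frac{\theta_0 - \vtheta^T\vx_0}{\norm{\vtheta}_2^2}\vtheta$, obtained by moving from $\vx_0$ along the normal direction $\vtheta$ by exactly the amount needed to hit the hyperplane; I would verify this either by a one-line Lagrange/KKT argument or by directly plugging in and checking $\vtheta^T\vx=\theta_0$ together with orthogonality of the displacement to the hyperplane. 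The distance moved is then $\norm{\vx_\text{proj}-\vx_0}_2 = d(\vx_0,\vtheta,\theta_0)$, so the incurred cost equals $d(\vx_0,\vtheta,\theta_0)^2$. Note the displacement vector can be written as $d(\vx_0,\vtheta,\theta_0)\,\vtheta/\norm{\vtheta}_2$; I would want to reconcile this with the Lemma's stated form $d(\vx_0,\vtheta,\theta_0)\,\vtheta$, which only matches when $\norm{\vtheta}_2=1$ — I suspect the intended reading uses the normalization $\sum_i\theta_i=1$ loosely, or $d$ is defined with $\norm{\vtheta}_2^2$ in some conventions, so I would state the projection carefully and flag that the clean displayed form presumes the appropriate normalization.

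Third, I would handle the budget and the final comparison. If $0 < d(\vx_0,\vtheta,\theta_0) \le B$ (reading $B$ as a bound on the distance, matching the Lemma's hypothesis), then the projected point is budget-feasible, achieves reward $r$ at cost $d^2 \le B$, and any other halfspace point costs at least $d^2$; one then checks that earning $r$ at cost $d^2$ beats staying put — this uses $r$ large enough, or more precisely it is implicit in the problem that crossing is worthwhile whenever feasible, which I would either assume or derive from the model's standing assumptions. When $d > B$, no point in the halfspace is within budget, so the reward $r$ is unattainable and the optimal choice is $\vx_\text{NB}=\vx_0$; when $d \le 0$, the agent is already classified positively and $\vx_\text{NB}=\vx_0$. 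Finally, the behavioral case is immediate: \eqref{eq:agent-optimization-behavioral} is syntactically identical to \eqref{eq:agent-optimization} with $\vtheta$ replaced by $\vw(\vtheta)$ in the classifier (the cost and budget are unchanged), so the same derivation yields $\vx_\text{B}$ by substituting $\vw(\vtheta)$ for $\vtheta$ throughout.

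The main obstacle I anticipate is not the projection geometry (standard) but the bookkeeping around the two normalizations and the threshold-crossing ``worthwhileness'' condition: I need to make sure the displayed formula $\vx_\text{NB}=\vx_0 + d\,\vtheta$ is stated under the convention that makes it literally correct, and I need to pin down exactly which implicit assumption on $r$ (or on the model) guarantees that a budget-feasible crossing is always preferred to inaction, since the Lemma asserts $\vx_\text{NB}$ equals the projection with no caveat on $r$. Everything else is a short KKT or direct-substitution check.
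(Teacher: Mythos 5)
Your proposal is correct and follows essentially the same route as the paper: reduce the problem to the Euclidean projection of $\vx_0$ onto the hyperplane $\vtheta^T\vx=\theta_0$ for agents who find it worthwhile and budget-feasible to move, and set $\vx_\text{NB}=\vx_0$ otherwise, with the behavioral case obtained by substituting $\vw(\vtheta)$. The two caveats you flag --- that the displayed displacement $d(\vx_0,\vtheta,\theta_0)\,\vtheta$ is literally correct only under the normalization $\norm{\vtheta}_2=1$ (the projection formula gives $d\,\vtheta/\norm{\vtheta}_2$), and that the lemma implicitly assumes $r$ is large enough that a feasible crossing is always preferred to inaction --- are both genuine and are glossed over in the paper's own proof as well.
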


Figure~\ref{fig:BR-illustration} illustrates the strategic agents' best-responses of Lemma~\ref{lemma:band-optimization}, in a two-dimensional feature space, when they are non-behavioral (Fig.~\ref{fig:NB-arrows}) and when they are behavioral (Fig.~\ref{fig:B-arrows}). We first note that the subset of agents with non-trivial responses to the classifier, as identified in  Lemma~\ref{lemma:band-optimization}, are in a band below the decision boundary. Given the overlaps of these bands under non-behavioral and behavioral responses, there are 6 regions of interest where biased agents' best-responses defer from rational agents (Fig.~\ref{fig:highlighted}). In regions \framebox(7,9){1} and \framebox(7,9){6}, agents invest no effort in manipulating their features when they are behaviorally biased, whereas they do when fully rational; the reasons differ: agents in \framebox(7,9){1} believe they are accepted without effort, while those in \framebox(7,9){6} believe they do not have sufficient budget to succeed. Agents in regions \framebox(7,9){2} and \framebox(7,9){5} manipulate their features unnecessarily (they would not, had they been fully rational), and again, for different reasons: agents in \framebox(7,9){2} are not accepted even at their highest effort level, while those in \framebox(7,9){5} believe they must reach the boundary but they would be accepted regardless of their effort. Finally, in region \framebox(7,9){3}, agents \emph{undershoot} the actual boundary (i.e., exert less effort than needed due to their biases), while those in region \framebox(7,9){4} \emph{overshoot} (i.e., exert more effort than needed to get accepted). 

\begin{figure*}[ht]
    \centering
    \begin{subfigure}[t]{0.3\textwidth}
        \includegraphics[width=0.82\textwidth]{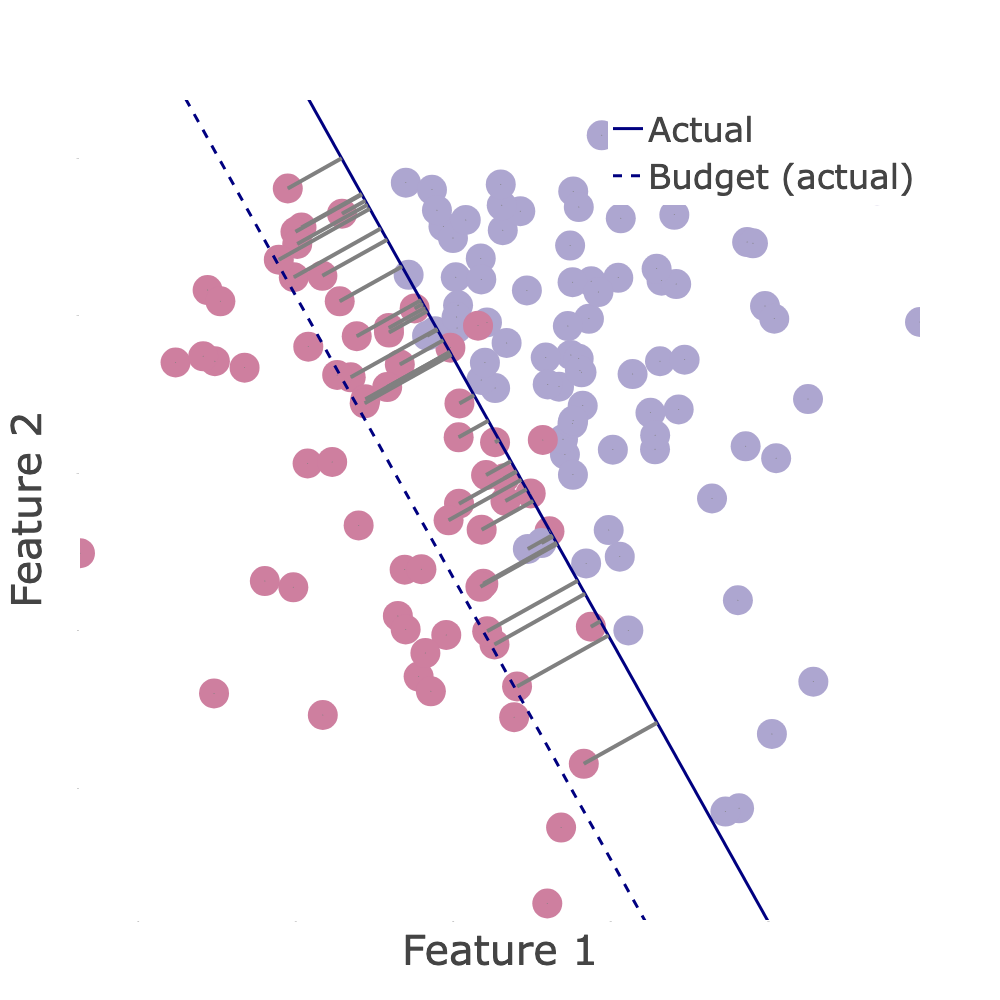}
            \caption{Rational strategic response}
        \label{fig:NB-arrows}
    \end{subfigure}
    \begin{subfigure}[t]{0.3\textwidth}
        \includegraphics[width=0.82\textwidth]{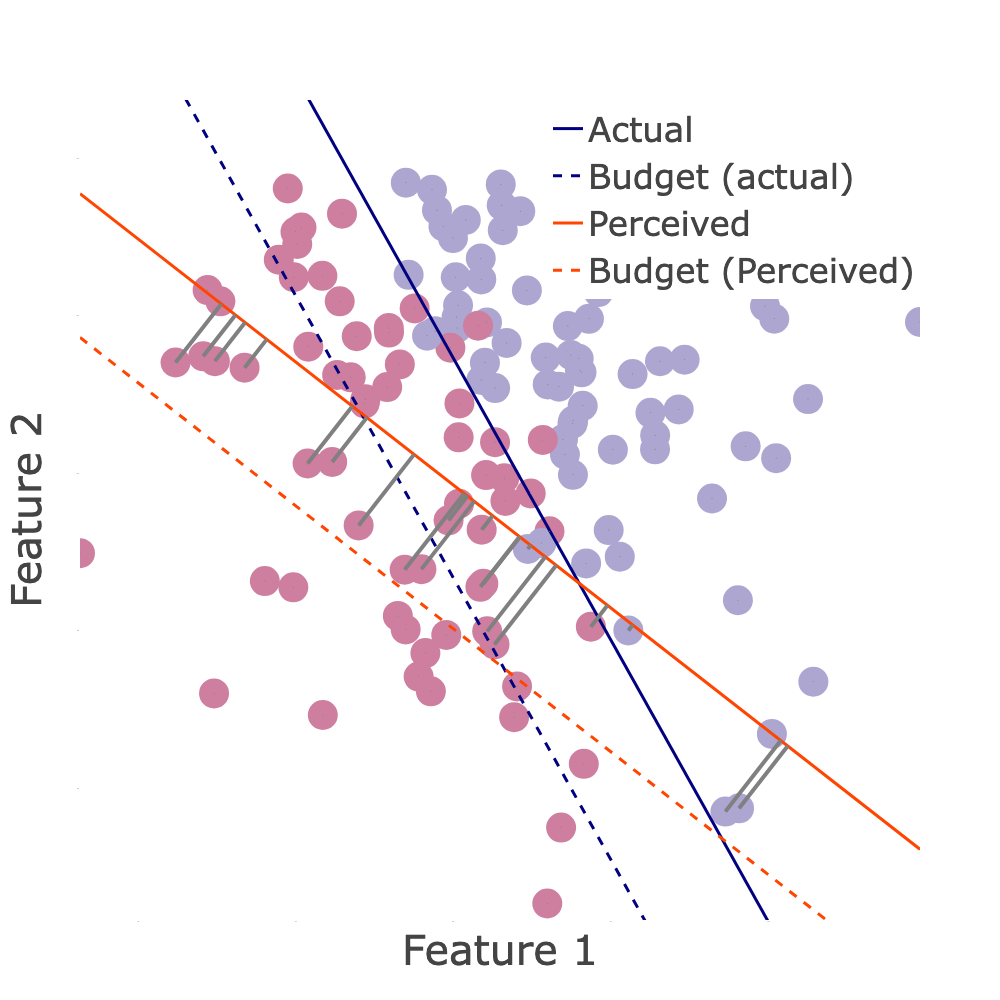}
        \caption{Biased strategic response}
        \label{fig:B-arrows}
    \end{subfigure}
    \begin{subfigure}[t]{0.3\textwidth}
        \includegraphics[width=0.82\textwidth]{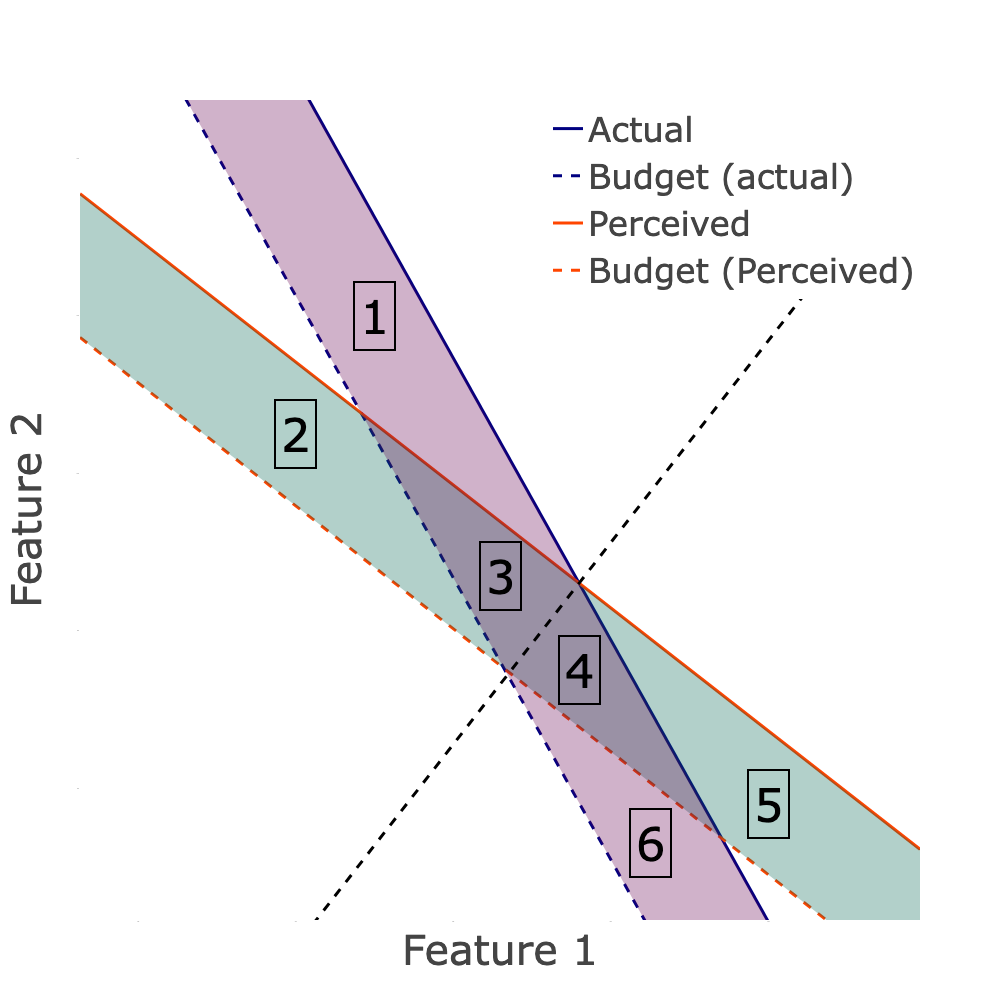}
            \caption{Differing strategic responses}
        \label{fig:highlighted}
    \end{subfigure}
    \caption{(a) Fully rational and (b) Biased responses, and (c) Classes of differing actions under quadratic costs.}
    \label{fig:BR-illustration}
\end{figure*} 

In the following proposition, we further investigate best-responses in region \framebox(7,9){4} (resp. region \framebox(7,9){3}) and identify which features behavioral agents over-invest in (resp. under-invest in) that leads to them overshooting (resp. undershooting) past the true classifier $(\vtheta, \theta_0)$. 
\begin{proposition}\label{prop:under-invest-high-dim}
Consider an agent with features $\vx_0$, facing classifier $(\vtheta, \theta_0)$, and with a misperceived $\vw(\vtheta)$. Let $\theta_{\max}=\max_i \theta_i$, $d(\vx_0, \vtheta, \theta_0)=\frac{\theta_0-\vtheta^T\vx_0}{\norm{\vtheta}_2}$, and  $\delta^{\text{NB}}_i=x_{\text{NB},i}-x_{0,i}$ and $\delta^{\text{B}}_i=x_{\text{B},i}-x_{0,i}$ denote the changes in feature $i$ after best-responses. Then:\\[2pt]
    \hspace*{0.1in}{\textbf{(1)}} If $d(\vx_0, \vw(\vtheta), \theta_0)\le d(\vx_0, \vtheta, \theta_0)$ and $w(\theta_i)<\theta_i$, then $\delta_i^{\text{B}}<\delta_i^{\text{NB}}$.\\[2pt]
    \hspace*{0.1in}\textbf{(2)} If $d(\vx_0, \vtheta, \theta_0)\le d(\vx_0, \vw(\vtheta), \theta_0)$ and $\theta_i<w(\theta_i)$ then $\delta_i^{\text{NB}}<\delta_i^{\text{B}}$.\\[2pt]
    \hspace*{0.1in}\textbf{(3)} For the special case of a Prelec function, we further have: If $d(\vx_0, \vtheta, \theta_0) \le e^{\gamma^\frac{1}{1-\gamma}-\gamma^\frac{\gamma}{1-\gamma}} d(\vx_0, \vw(\vtheta), \theta_0)$ and $w(\theta_{\max})<\theta_{\max}$, then
    $\delta_{\max}^{\text{NB}}<\delta_{\max}^{\text{B}}$. 
\end{proposition}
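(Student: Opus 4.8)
The plan is to reduce everything to the closed-form best responses in Lemma~\ref{lemma:band-optimization}. I work in the regime where both agents move, i.e.\ $0<d(\vx_0,\vtheta,\theta_0)\le B$ and $0<d(\vx_0,\vw(\vtheta),\theta_0)\le B$ (this is precisely the overlap of the two bands, which contains regions~3 and~4). There Lemma~\ref{lemma:band-optimization} gives $\vx_\text{NB}=\vx_0+d(\vx_0,\vtheta,\theta_0)\vtheta$ and $\vx_\text{B}=\vx_0+d(\vx_0,\vw(\vtheta),\theta_0)\vw(\vtheta)$, hence
$$\delta_i^{\text{NB}}=d(\vx_0,\vtheta,\theta_0)\,\theta_i,\qquad \delta_i^{\text{B}}=d(\vx_0,\vw(\vtheta),\theta_0)\,w(\theta_i),$$
where all four scalars are nonnegative and the two distances are strictly positive. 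So each claim becomes a comparison of products of nonnegative reals.

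For \textbf{(1)}, chain $d(\vx_0,\vw(\vtheta),\theta_0)\,w(\theta_i)\le d(\vx_0,\vtheta,\theta_0)\,w(\theta_i)< d(\vx_0,\vtheta,\theta_0)\,\theta_i$: the first step uses the distance hypothesis and $w(\theta_i)\ge 0$, the second uses $w(\theta_i)<\theta_i$ together with $d(\vx_0,\vtheta,\theta_0)>0$. Part \textbf{(2)} is the same argument with the roles of $\vtheta$ and $\vw(\vtheta)$ interchanged. Both are immediate once the closed forms are in hand.

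For \textbf{(3)} I would first rewrite the target $\delta^{\text{NB}}_{\max}<\delta^{\text{B}}_{\max}$ (all quantities positive) as $\frac{d(\vx_0,\vtheta,\theta_0)}{d(\vx_0,\vw(\vtheta),\theta_0)}<\frac{w(\theta_{\max})}{\theta_{\max}}$. Since the hypothesis bounds the left side by $e^{\gamma^{\frac{1}{1-\gamma}}-\gamma^{\frac{\gamma}{1-\gamma}}}$, it suffices to show $\frac{w(\theta_{\max})}{\theta_{\max}}\ge e^{\gamma^{\frac{1}{1-\gamma}}-\gamma^{\frac{\gamma}{1-\gamma}}}$, with at least one inequality in the chain strict. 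This needs two ingredients. First, a one-variable optimization identifying the constant: with $p(z)=\exp(-(-\ln z)^\gamma)$ and the substitution $u=-\ln z$ we get $\tfrac{p(z)}{z}=e^{u-u^\gamma}$; minimizing $\phi(u)=u-u^\gamma$ over $u>0$ gives the unique critical point $u^\star=\gamma^{\frac{1}{1-\gamma}}$ from $\phi'(u)=1-\gamma u^{\gamma-1}=0$, which is a minimum since $\phi''(u)=\gamma(1-\gamma)u^{\gamma-2}>0$, with value $\phi(u^\star)=\gamma^{\frac{1}{1-\gamma}}-\gamma^{\frac{\gamma}{1-\gamma}}$; hence $\min_{z\in(0,1)}\tfrac{p(z)}{z}=e^{\gamma^{\frac{1}{1-\gamma}}-\gamma^{\frac{\gamma}{1-\gamma}}}$. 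Second, a lower bound $\frac{w(\theta_{\max})}{\theta_{\max}}\ge\min_{z\in(0,1)}\tfrac{p(z)}{z}$: ordering features in decreasing weight so the heaviest feature is the first coordinate, the telescoping definition of $\vw(\cdot)$ gives $w(\theta_{\max})=p(\theta_{\max})-p(0)=p(\theta_{\max})$, so $\frac{w(\theta_{\max})}{\theta_{\max}}=\frac{p(\theta_{\max})}{\theta_{\max}}\ge\min_z\tfrac{p(z)}{z}$. Combining these with the hypothesis closes the chain; strictness follows from $\theta_{\max}$ generically differing from the minimizer $e^{-u^\star}$ (equivalently, from the configuration lying strictly inside region~4, where $d(\vx_0,\vtheta,\theta_0)<d(\vx_0,\vw(\vtheta),\theta_0)$).

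The hard part will be the second ingredient of \textbf{(3)}. Because the Prelec weighting is inverse-S shaped, $w(\theta_i)/\theta_i$ is the average slope of $p$ over feature $i$'s cumulative-mass interval $[S_{i-1},S_i]$, and for a feature whose interval straddles $z=1/e$ this average can drop to $\min_z p'(z)=\gamma$, which is strictly below $e^{\gamma^{\frac{1}{1-\gamma}}-\gamma^{\frac{\gamma}{1-\gamma}}}$; so the bound genuinely relies on the heaviest feature occupying the leading segment $[0,\theta_{\max}]$, and on the monotonicity/convexity facts about $p$ that justify $p(z)/z\ge\min_z p(z)/z$. The remaining pieces — invoking Lemma~\ref{lemma:band-optimization}, the product comparisons in (1)--(2), and the one-variable calculus — are routine.
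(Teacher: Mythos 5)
Your proposal is correct and follows essentially the same route as the paper's proof: reduce to the closed-form best responses of Lemma~\ref{lemma:band-optimization}, compare the products $d(\vx_0,\vtheta,\theta_0)\,\theta_i$ versus $d(\vx_0,\vw(\vtheta),\theta_0)\,w(\theta_i)$ for parts (1)--(2), and for part (3) use $w(\theta_{\max})=p(\theta_{\max})$ together with the one-variable optimization of $p(z)/z$ (the paper equivalently maximizes $z/p(z)$, obtaining the same critical point and constant). Your version is, if anything, slightly more careful than the paper's about the band-overlap regime, the ordering convention behind $w(\theta_{\max})=p(\theta_{\max})$, and the strict-versus-weak inequality at the boundary, which the paper also glosses over.
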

Intuitively, the proposition states that agents who perceive the decision boundary to be closer to them than it truly is (regions \framebox(7, 9){2} and \framebox(7, 9){3} in Figure~\ref{fig:highlighted}) will under-invest in the features for which they underestimate the importance. Similarly, agents that perceive the boundary to be farther (regions \framebox(7, 9){4} and \framebox(7, 9){5} in Figure~\ref{fig:highlighted}) will over-invest in the features for which they overestimate the importance. 

\subsection{Alternative Cost Functions}
We next show that regions of differing responses between behavioral and non-behavioral agents, similar to those depicted in Figure~\ref{fig:highlighted}, will also emerge under quadratic and weighted Manhattan cost functions. 

\paragraph{Quadratic Cost Function.}
The following lemma characterizes the post-strategic features under the quadratic cost $c(\vx, \vx_0)=\sum_i c_i(x_{i}-x_{i,0})^2$.
\begin{lemma}\label{lemma:quad-cost-band}
    Let $\mC$ denote a diagonal matrix with $c_i$'s as its diagonal, and let $\vy$ denote the feature vectors satisfying $\vtheta^T\vy = \theta_0$. For an agent with starting feature vector $\vx_0$, if $\vx_0$ is in the n-dimensional ellipsoid described by $B$ and $\mC$, i.e., if $(\vy-\vx_0)^T\mC (\vy-\vx_0)\le B$,
    \begin{align*}
        x_{\text{NB}, i} = \frac{\theta_{0}-\vtheta^T\vx_0}{\sum_j \frac{\theta_j^2}{c_j}}\cdot \frac{\theta_i}{c_i}+x_{0,i}~.
    \end{align*}
    Otherwise, $\vx_\text{NB}=\vx_0$. For behaviorally biased agents, $\vx_\text{B}$ is obtained similarly by replacing $\vtheta$ with $\vw(\vtheta)$.
\end{lemma}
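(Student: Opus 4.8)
The plan is to reduce \eqref{eq:agent-optimization} to a convex quadratic program with a single affine constraint, solve it in closed form by Lagrange multipliers, and read the feasibility dichotomy off the optimal value.

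First I would exploit that $h(\vx,(\vtheta,\theta_0))\in\{0,1\}$ while $c(\vx,\vx_0)\ge 0$ is strictly convex with unique minimizer $\vx_0$. Hence an agent either stays at $\vx_0$ (payoff $r\,h(\vx_0,(\vtheta,\theta_0))$ at zero cost) or moves to some $\vx$ with $\vtheta^T\vx\ge\theta_0$, in which case it is optimal to take the least-cost such $\vx$. Assuming $\vx_0$ is strictly below the boundary, $\vtheta^T\vx_0<\theta_0$ (the case of interest, mirroring the hypothesis $0<d(\vx_0,\vtheta,\theta_0)$ of Lemma~\ref{lemma:band-optimization}), the projection of $\vx_0$ onto the convex halfspace $\{\vtheta^T\vx\ge\theta_0\}$ lies on the hyperplane $\vtheta^T\vx=\theta_0$; call its cost $c^\star$. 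If $c^\star>B$ no feasible point is accepted, every feasible $\vx$ has payoff $-c(\vx,\vx_0)\le 0$, so $\vx_\text{NB}=\vx_0$; if $c^\star\le B$ the budget constraint is slack at the minimizer and may be dropped (invoking the standing assumption that $r$ is large enough that acceptance is worthwhile whenever affordable).

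Next I would solve $\min_\vx\sum_i c_i(x_i-x_{0,i})^2$ subject to $\vtheta^T\vx=\theta_0$. Strict convexity with one affine constraint makes the KKT stationarity conditions necessary and sufficient: from the Lagrangian $\sum_i c_i(x_i-x_{0,i})^2-\lambda(\vtheta^T\vx-\theta_0)$ one gets $x_i-x_{0,i}=\tfrac{\lambda}{2}\theta_i/c_i$; substituting into the constraint gives $\tfrac{\lambda}{2}=(\theta_0-\vtheta^T\vx_0)/\sum_j(\theta_j^2/c_j)$, hence
\[
x_{\text{NB},i}=\frac{\theta_0-\vtheta^T\vx_0}{\sum_j\theta_j^2/c_j}\cdot\frac{\theta_i}{c_i}+x_{0,i},
\]
i.e. $\vx_\text{NB}$ is the projection of $\vx_0$ onto $\{\vtheta^T\vx=\theta_0\}$ in the inner product induced by $\mC$. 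Plugging back, $c^\star=\sum_i c_i(x_{\text{NB},i}-x_{0,i})^2=(\theta_0-\vtheta^T\vx_0)^2/\sum_j(\theta_j^2/c_j)$, which is exactly $\min_{\vy:\,\vtheta^T\vy=\theta_0}(\vy-\vx_0)^T\mC(\vy-\vx_0)$. Therefore $c^\star\le B$ is precisely the condition that $\vx_0$ lie in the ``$n$-dimensional ellipsoid described by $B$ and $\mC$'' (equivalently, that some boundary point $\vy$ is within $\mC$-cost $B$ of $\vx_0$), which yields the two cases of the lemma. For behaviorally biased agents, \eqref{eq:agent-optimization-behavioral} is the same program with the perceived boundary $\vw(\vtheta)^T\vx=\theta_0$ replacing $\vtheta^T\vx=\theta_0$ while the true cost and budget are unchanged, so every step carries over verbatim with $\vtheta\mapsto\vw(\vtheta)$.

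I do not expect a serious obstacle here — the core computation is a textbook weighted projection — but the steps needing care are: (a) making the reduction from ``maximize $rh-c$'' to ``minimize $c$'' rigorous, which requires the standing assumption on $r$ (or an explicit hypothesis analogous to $d\le B$ in Lemma~\ref{lemma:band-optimization}); (b) the implicit hypothesis $\vtheta^T\vx_0<\theta_0$, without which already-accepted agents do not move and the displayed formula does not apply; and (c) reading the quantifier on $\vy$ in the feasibility condition as the closest boundary point rather than an arbitrary one.
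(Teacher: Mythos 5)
Your proposal is correct and follows essentially the same route as the paper: reduce the agent's problem to minimizing the quadratic cost subject to the affine constraint $\vtheta^T\vx=\theta_0$, solve by Lagrange multipliers to get $x_i-x_{0,i}=\tfrac{\lambda}{2}\theta_i/c_i$ with $\tfrac{\lambda}{2}=(\theta_0-\vtheta^T\vx_0)/\sum_j(\theta_j^2/c_j)$, and handle the non-moving agents separately. Your additional care in computing the optimal cost $c^\star=(\theta_0-\vtheta^T\vx_0)^2/\sum_j(\theta_j^2/c_j)$ and identifying $c^\star\le B$ with membership in the stated ellipsoid (reading $\vy$ as the nearest boundary point in the $\mC$-metric) makes explicit a feasibility step the paper leaves implicit, but it is a refinement of, not a departure from, the paper's argument.
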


Figure~\ref{fig:BR-illustration-quad-cost} illustrates the best-responses of Lemma~\ref{lemma:quad-cost-band} for rational (non-behavioral) and biased (behavioral) agents. Specifically, the condition in Lemma~\ref{lemma:quad-cost-band} constructs an $n$-dimensional ellipsoid around every point on the line $\vtheta^T\vx = \theta_{0}$, containing agents who have sufficient budget to strategically change their features to reach that particular point on the boundary, with the coefficients $c_i$ determining the scaling along each axis. 
Since the scaling of the ellipsoid does not depend on the point of the line we are focusing on, the union of these ellipsoids (determining the set of all agents who can afford to be classified positively through gaming) forms a band below the line $\vtheta^T\vx=\theta_0$. Note that this band differs from the one in Lemma~\ref{lemma:band-optimization}. 

\begin{figure}[ht]
    \centering
    \begin{subfigure}[t]{0.4\textwidth}
        \includegraphics[width=0.9\textwidth]{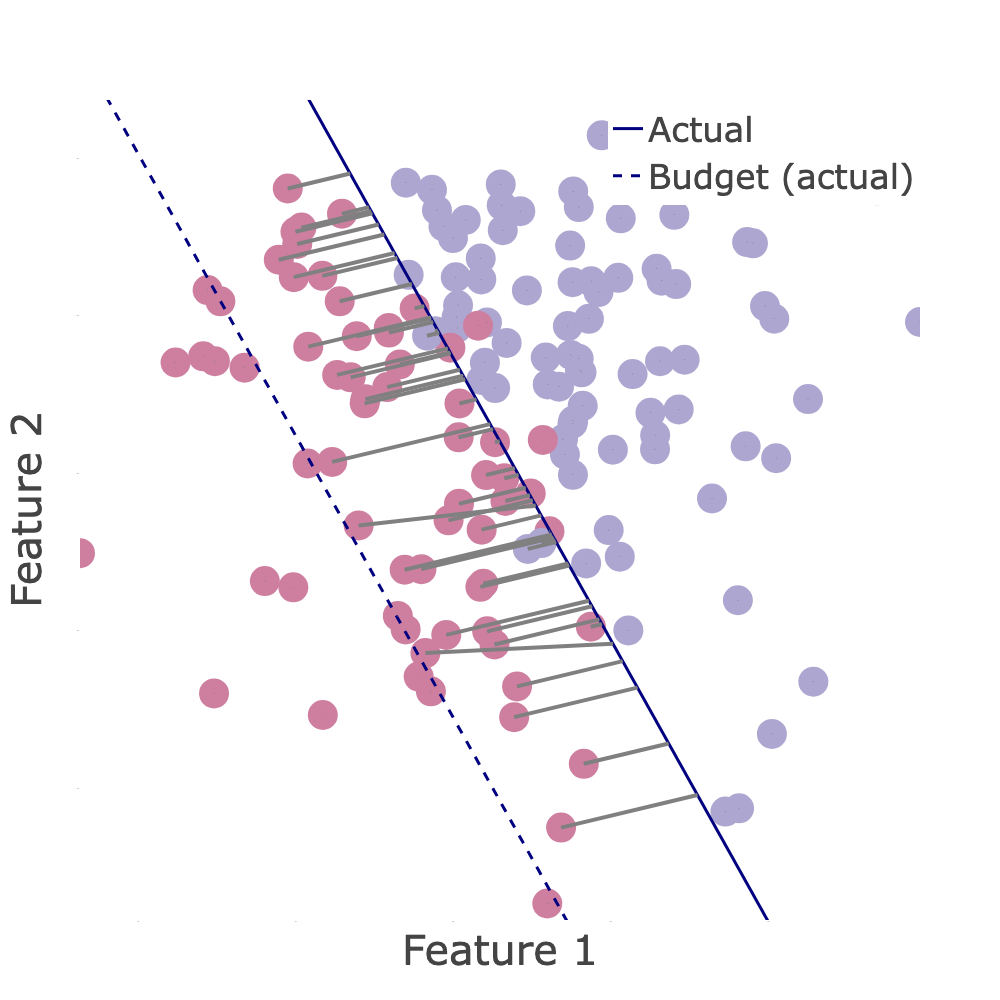}
            \caption{Rational response}
        \label{fig:NB-arrows-quad-cost}
    \end{subfigure}
    \begin{subfigure}[t]{0.4\textwidth}
        \includegraphics[width=0.9\textwidth]{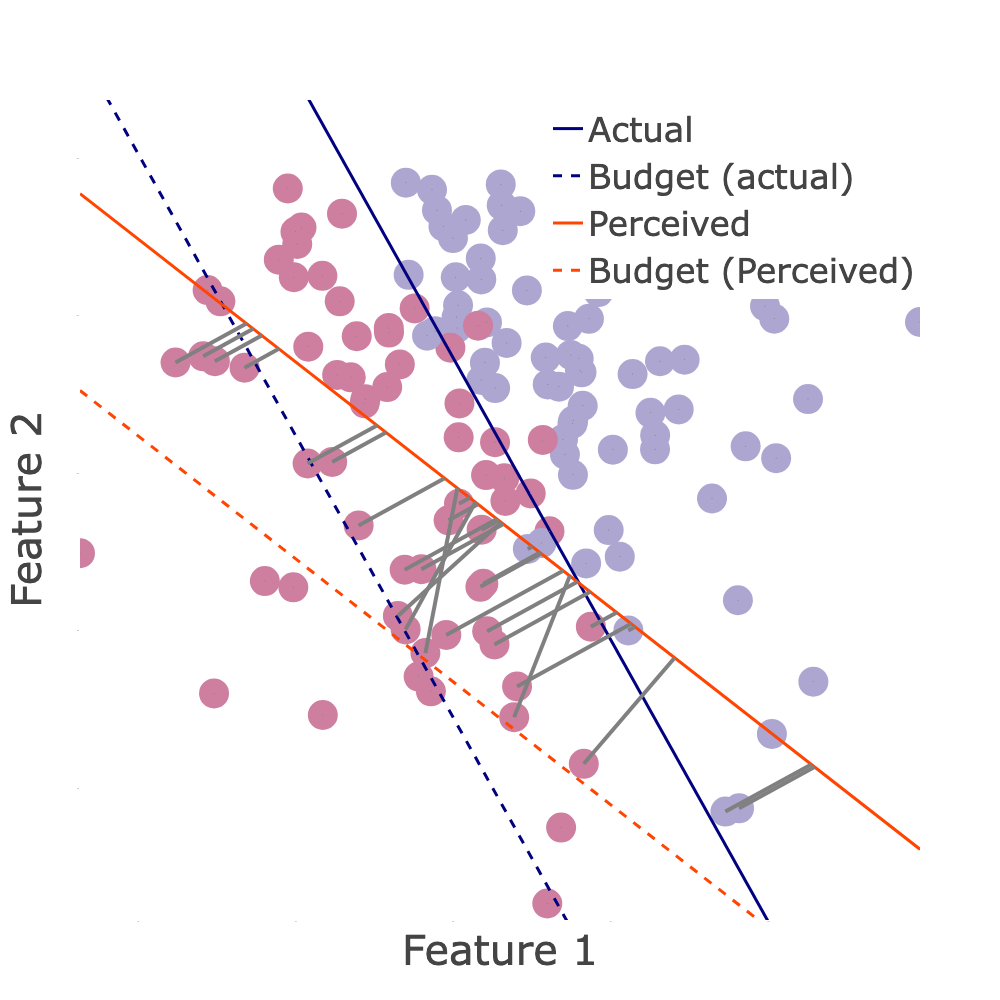}
        \caption{Biased response}
        \label{fig:B-arrows-quad-cost}
    \end{subfigure}
    \caption{Strategic responses under quadratic costs.}
    \label{fig:BR-illustration-quad-cost}
\end{figure} 

\paragraph{Weighted Manhattan Distance Cost Function.}
The following lemma characterizes the post-strategic features under the weighted Manhattan cost $c(\vx, \vx_0)=\sum_i c_i|x_{i}-x_{i,0}|$.
\begin{lemma}\label{lemma:manhattan-cost-band}
    Let $\ve_i$ be the unit vector with 1 in the $i$\textsuperscript{th} coordinate and 0 elsewhere, and $k = \arg\min_i \frac{c_i}{\theta_i}$. For an agent with starting feature $\vx_0$, if $\vtheta^T\vx_0 + \frac{\theta_k}{c_k}B \ge \theta_0$,
    \begin{align*}
        \vx_\text{NB} = \vx_0 + (\theta_0-\vtheta^T\vx_0)\frac{c_k}{\theta_k}\ve_k~.
    \end{align*}
    Otherwise, $\vx_\text{NB}=\vx_0$. For behaviorally biased agents, $\vx_\text{B}$ is obtained similarly by replacing $\vtheta$ with $\vw(\vtheta)$.
\end{lemma}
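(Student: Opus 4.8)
The argument reduces \eqref{eq:agent-optimization} to an elementary one-constraint linear program. First observe that $h$ takes only the values $0$ and $1$, so the objective equals $-c(\vx,\vx_0)$ on the rejecting halfspace $\{\vtheta^T\vx<\theta_0\}$ and $r-c(\vx,\vx_0)$ on the accepting halfspace $\{\vtheta^T\vx\ge\theta_0\}$. On the rejecting side the objective is maximized at $\vx=\vx_0$ (zero cost), so the only two candidates are $\vx_0$ itself and the point of the accepting halfspace of least cost. In the nontrivial case $\vtheta^T\vx_0<\theta_0$ the (convex) accepting halfspace does not contain $\vx_0$, so the point of that halfspace minimizing $c(\cdot,\vx_0)$ lies on the hyperplane $\vtheta^T\vx=\theta_0$. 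Hence the agent either stays at $\vx_0$ or moves to a solution of
\begin{align*}
 \min_{\vx}\ c(\vx,\vx_0)\quad\text{subject to}\quad \vtheta^T\vx=\theta_0,
\end{align*}
and chooses the latter exactly when that minimum is at most $B$ (and, as implicitly assumed in Lemma~\ref{lemma:band-optimization} and Lemma~\ref{lemma:quad-cost-band}, provided the reward $r$ makes acceptance worthwhile at that cost). The case $\vtheta^T\vx_0\ge\theta_0$ is immediate: $\vx_0$ is already accepted at zero cost, so $\vx_\text{NB}=\vx_0$.

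It remains to solve the min-cost problem. Writing $\delta_i=x_i-x_{0,i}$ and $\Delta:=\theta_0-\vtheta^T\vx_0>0$, the constraint is $\sum_i\theta_i\delta_i=\Delta$ and the cost is $\sum_i c_i|\delta_i|$. Since the weights $\theta_i$ and costs $c_i$ are positive, any coordinate with $\delta_i<0$ can be reset to $0$ without violating the constraint while strictly lowering the cost, so we may restrict to $\delta_i\ge 0$. Substituting $u_i:=\theta_i\delta_i\ge 0$ turns the problem into $\min_u\ \sum_i(c_i/\theta_i)\,u_i$ subject to $\sum_i u_i=\Delta$, $u_i\ge 0$, whose optimum places the entire mass $\Delta$ on a coordinate of smallest ratio $c_i/\theta_i$. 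With $k=\arg\min_i c_i/\theta_i$ this means only coordinate $k$ changes, increasing by exactly the amount that brings $\vtheta^T\vx$ up to $\theta_0$ (i.e.\ the displacement is proportional to $\ve_k$), at total cost $\frac{c_k}{\theta_k}(\theta_0-\vtheta^T\vx_0)$. This is affordable precisely when $\frac{c_k}{\theta_k}(\theta_0-\vtheta^T\vx_0)\le B$, i.e.\ when $\vtheta^T\vx_0+\frac{\theta_k}{c_k}B\ge\theta_0$; otherwise no point of the accepting halfspace is reachable and the agent stays at $\vx_0$. This gives the stated characterization of $\vx_\text{NB}$.

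For the behavioral agent, \eqref{eq:agent-optimization-behavioral} is the same program with $h(\vx,(\vw(\vtheta),\theta_0))$ replacing $h(\vx,(\vtheta,\theta_0))$, while the true cost $c(\cdot,\cdot)$ and the budget $B$ are unchanged; so the derivation above applies verbatim with $\vtheta$ replaced by $\vw(\vtheta)$ (and the pivot coordinate becoming $\arg\min_i c_i/w_i(\vtheta)$), yielding $\vx_\text{B}$. I expect the main (mild) obstacles to be bookkeeping rather than conceptual: making the "constraint is active'' reduction precise via convexity, the "$\delta_i\ge 0$'' reduction via positivity of $\theta_i$ and $c_i$ (and of the perceived weights), and handling ties — if several coordinates attain $\min_i c_i/\theta_i$, any split of $\Delta$ among them is optimal and the lemma simply picks one such vertex. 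The substance of the lemma is the structural observation that, because $h$ is a threshold, gaming amounts to hitting the boundary as cheaply as possible, and under an $\ell_1$-type cost this is always achieved by moving along a single coordinate.
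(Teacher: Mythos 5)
Your proof is correct and follows essentially the same route as the paper's: reduce the best response to minimizing the cost of reaching the accepting halfspace (hence the hyperplane $\vtheta^T\vx=\theta_0$), then observe that under a weighted $\ell_1$ cost the optimum spends the entire required improvement on the feature with the smallest ratio $c_i/\theta_i$ — the paper asserts this in a single sentence, whereas you carry out the simplex/LP argument and the affordability check explicitly. One point your derivation usefully surfaces: the displacement along $\ve_k$ needed to reach the boundary is $(\theta_0-\vtheta^T\vx_0)/\theta_k$, with total cost $(\theta_0-\vtheta^T\vx_0)\,c_k/\theta_k$, so the factor $c_k/\theta_k$ in the lemma's displayed expression for $\vx_\text{NB}$ appears to be the cost rather than the displacement (consistent with the budget condition, which compares exactly that cost to $B$).
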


Figure~\ref{fig:BR-illustration-lin-cost} illustrates the best-responses of Lemma~\ref{lemma:manhattan-cost-band} for rational (non-behavioral) and biased (behavioral) agents.
Again, the set of agents who can afford to game the system to receive a positive classification, as identified in Lemma~\ref{lemma:manhattan-cost-band}, is a band below the classifier (but different from those of Lemmas~\ref{lemma:band-optimization} and \ref{lemma:quad-cost-band}). In particular, under this cost, agents spend all their budget on changing the feature with the most ``bang-for-the-buck'' $\frac{c_i}{\theta_i}$ (or perceived bang-for-the-buck $\frac{c_i}{w_i(\vtheta)}$). As seen in the two-dimensional illustration in Figure~\ref{fig:BR-illustration-lin-cost}, this means that while it is optimal for rational agents to invest only in feature 2, those with behavioral bias believe feature 1 has a better return, leading to a sub-optimal response by them. We also note that even though the movements of agents in the specified band are different from the movement for the norm-2 cost, the bands form the same regions of differing responses as in Figure~\ref{fig:BR-illustration}, where agents overshoot, undershoot, do nothing at all, or needlessly change their features, when they are behaviorally biased. 

\begin{figure}[ht]
    \centering
    \begin{subfigure}[t]{0.4\textwidth}
        \includegraphics[width=0.9\textwidth]{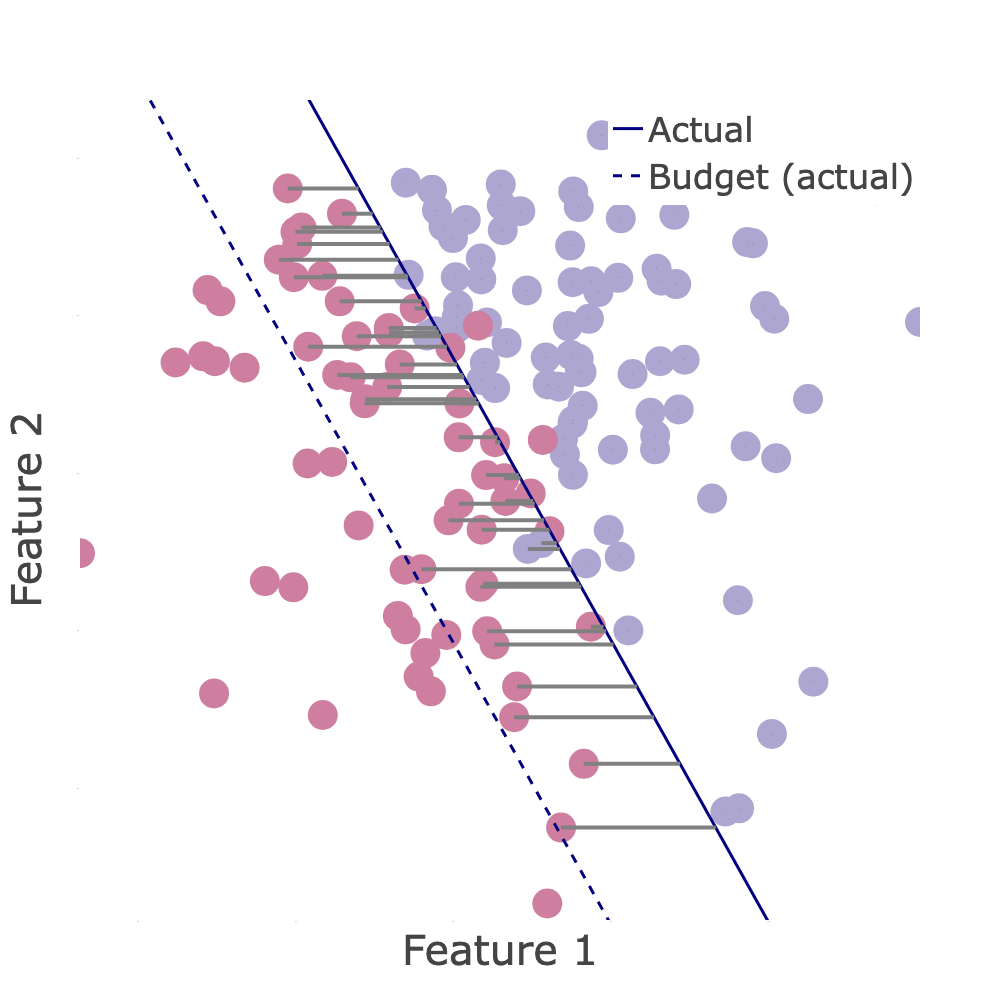}
            \caption{Rational response}
        \label{fig:NB-arrows-lin-cost}
    \end{subfigure}
    \begin{subfigure}[t]{0.4\textwidth}
        \includegraphics[width=0.9\textwidth]{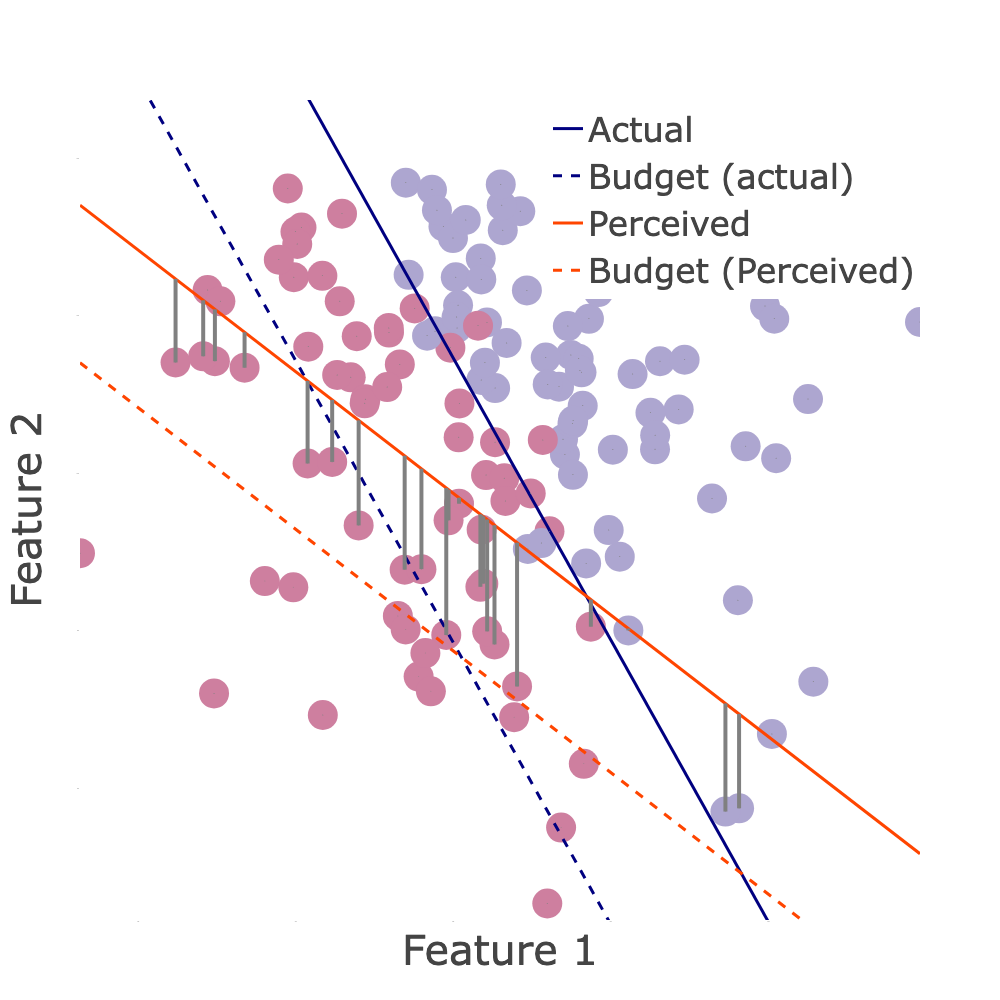}
        \caption{Biased response}
        \label{fig:B-arrows-lin-cost}
    \end{subfigure}
    \caption{Strategic responses under Manhattan costs.}
    \label{fig:BR-illustration-lin-cost}
\end{figure}  

\paragraph{Cost Function in the User Studies.} For our human subject experiments in Section~\ref{sec:user-study}, we describe the cost of changing features to participants through a \emph{piecewise linear cost function}. This can be viewed as an approximation of a quadratic cost using a step function with a weighted Manhattan cost at each step, with the approximation improving as the number of steps increases (see the two-dimensional illustration in Figure~\ref{fig:2d-approx-illustration}). Specifically, in our user experiments, we break the budget $B$ into three steps of increments $B_1$, $B_2$, and $B_3$ with $B_1+B_2+B_3=B$, and assign a constant cost $c_1$, $c_2$, and $c_3$ for changing features at each increment. This means that in each step, agents face a weighted Manhattan cost, but overall, the cost is not fixed, and investing in a single feature is not optimal. 

\begin{figure}[ht]
    \centering
    \includegraphics[width=0.5\linewidth]{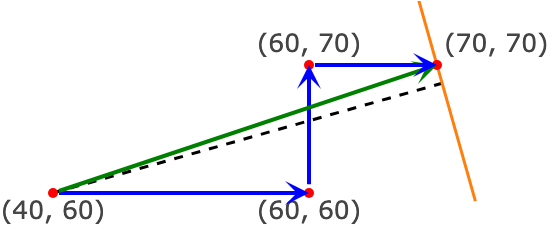}
    \caption{Strategic responses under a quadratic cost (green) vs. a piece-wise linear cost function (blue).}
    \label{fig:2d-approx-illustration}
\end{figure}

\section{Firm's Response}\label{sec:firm-response}
We next consider the firm's optimal choice of a classifier, given agents' strategic responses, and its impact on the firm's utility and agents' welfare. Intuitively, one might expect a firm to ultimately benefit from agents' behavioral responses (in contrast to fully rational responses) as behavioral agents are less adept at gaming the algorithm. However, in this section, we show that this is not always true. Intuitively, as demonstrated in Section~\ref{sec:agetns-response}, behavioral agents may overshoot or undershoot the threshold when gaming the algorithm (compared to rational agents); this includes both qualified (label 1) and unqualified (label 0) agents. We show that there exist scenarios in which a relatively higher number of behaviorally biased qualified agents end up below the threshold (due to not trying or undershooting) while relatively more unqualified agents overshoot and end up accepted by the classifier; the combination of these factors can decrease the firm's utility. In other words, perhaps unexpectedly, in these situations, the firm would prefer rational agents, who are better at gaming the system, to behaviorally biased agents, who are worse at gaming the system. 
The following example numerically illustrates this. 

\begin{example}\label{ex:firm-benefit-hurt}
Consider a setting where we have a 2D feature space and qualified (resp. unqualified) agents are sampled from a normal distribution $\mathcal{N}(\vmu_{1}, \Sigma_1)$ (resp. $\mathcal{N}(\vmu_0, \Sigma_0)$). We consider three scenarios; the first two scenarios only differ in the mean $\vmu_{1}$ choice, and the third scenario differs with these in $\vmu_{1}$, $\Sigma_1$, $\Sigma_0$, and $B$ (see Appendix~\ref{sec:app-numerical-details} for details). The first two scenarios (top and middle rows in Figure~\ref{fig:firm-benefit-hurt-dist}) are baselines: we consider an \emph{oblivious} firm that chooses its classifier without accounting for any strategic response (whether rational or behavioral) from agents. This helps us hone in on the impacts of agents' qualification states on the firm's utility. Then, in the third scenario (bottom row in Figure~\ref{fig:firm-benefit-hurt-dist}), we consider a firm that is aware of strategic behavior (and any behavioral biases) by agents and optimally adjusts its classifier. For each scenario, Figure~\ref{fig:firm-benefit-hurt-dist} illustrates the distribution of agents' features for pre-strategic (left panel), post-strategic non-behavioral responses (middle panel), and post-strategic behaviorally-biased responses (right panel). The firm's utility in each case is shown at the top of the corresponding subplot.

We start with the baselines (an oblivious firm that keeps the classifier fixed). In the top row scenario, the firm is negatively impacted by agents' behavioral biases, while in the middle row scenario, the firm benefits from agents' biases (both compared to the fully rational setting). The reason for this difference is that there are more qualified agents than unqualified ones who reach the threshold in non-biased responses. On the other hand, under biased responses, there are more unqualified agents who pass the threshold, regardless of their bias (those in region \framebox(7, 9){3} in Fig.~\ref{fig:highlighted}) in the top row scenario. Behavioral responses by these agents negatively impact the firm, as it leads to these qualified agents no longer being accepted. 

Next, we consider the (non-oblivious) firm that adjusts its classifier optimally (accounting for strategic responses and behavioral biases, if any). We observe that even though the firm is aware of agents' bias, its loss is higher than the case of rational responses. As seen in the left panel of the bottom row of Figure~\ref{fig:firm-benefit-hurt-dist}, more regions can impact the loss than in Figure~\ref{fig:BR-illustration}. The most important regions in this scenario are the areas accepted by $\vtheta_\text{NB}$ but not by $\vtheta_\text{B}$ (after response), and vice versa. As there are more qualified than unqualified agents in these two regions, the firm is negatively impacted by agents' bias (compared to fully rational agents) even though the firm is aware of the bias.

    \begin{figure}[ht]
        \centering
        \includegraphics[width=0.95\linewidth]{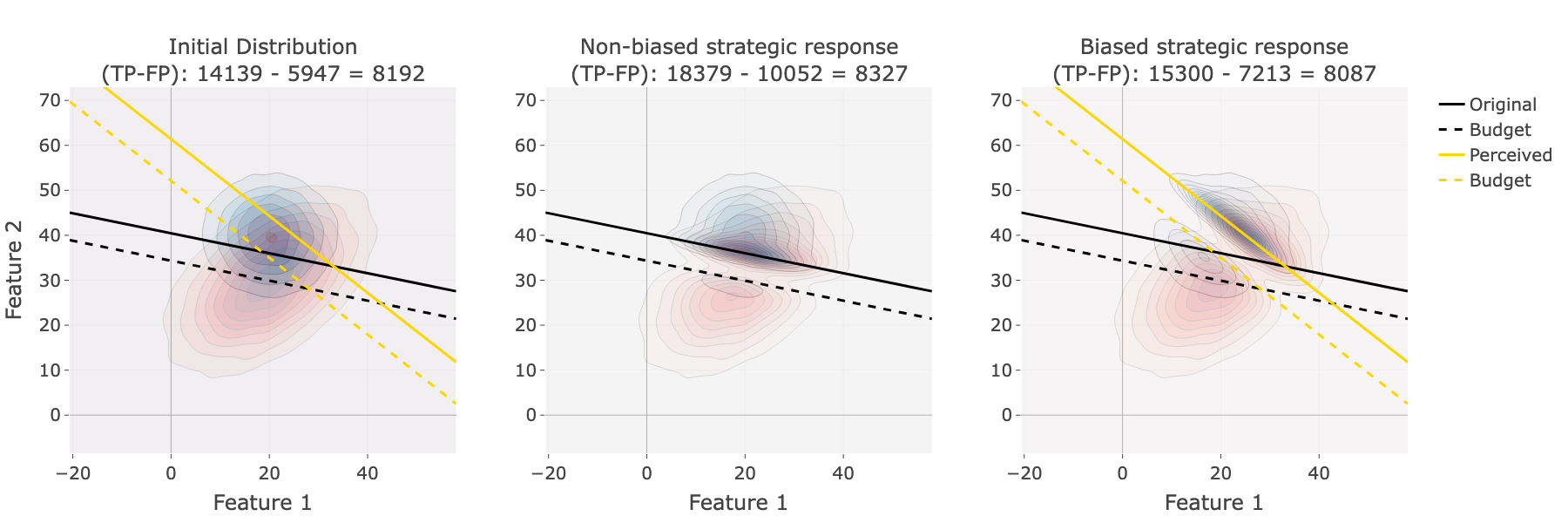}
        \includegraphics[width=0.95\linewidth]{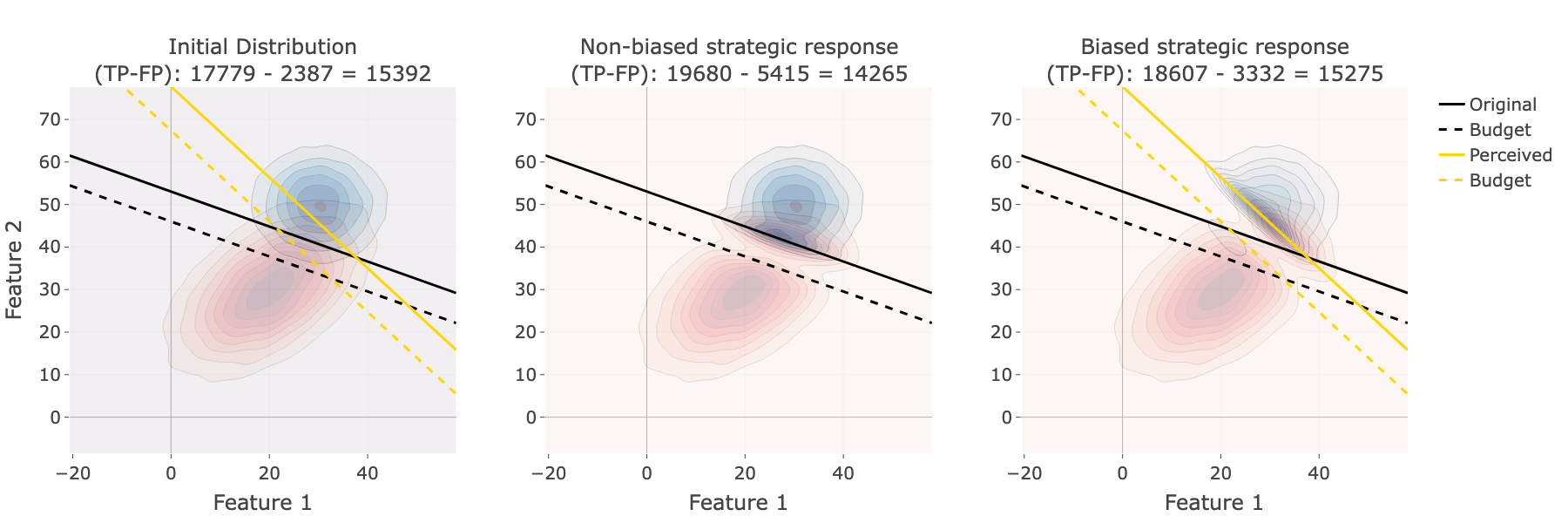}
        \includegraphics[width=0.95\linewidth]{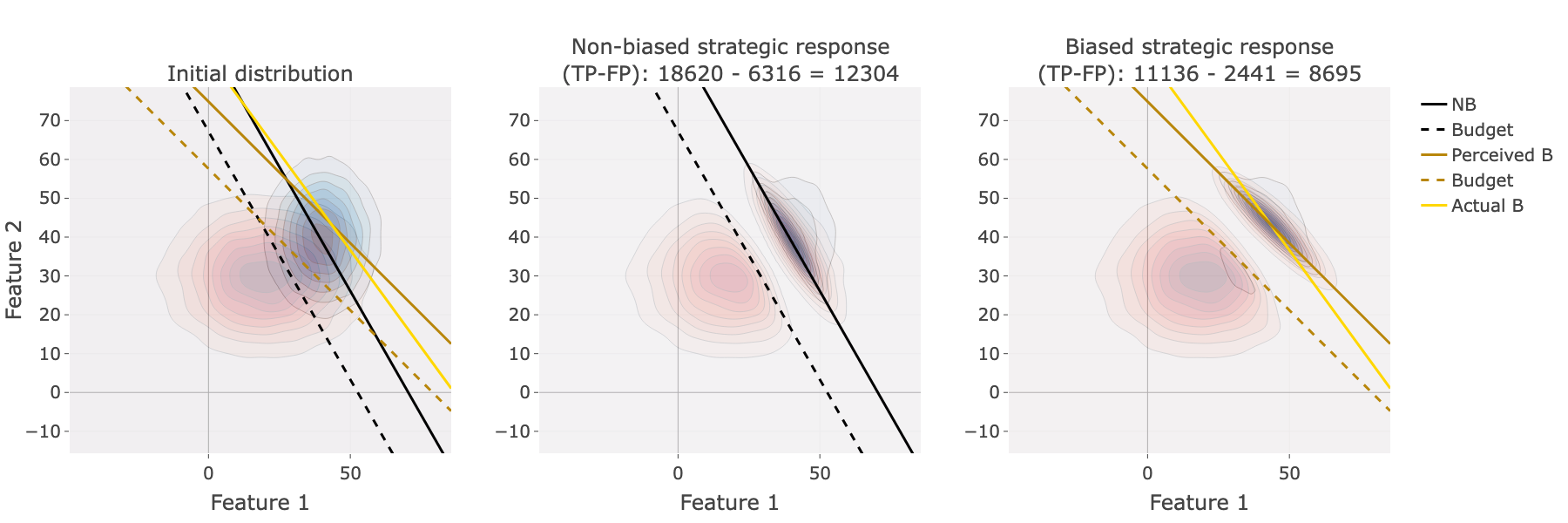}
        \caption{An oblivious firm may have lower (top) or higher (middle) utility when agents are biased (vs. rational). A non-oblivious firm may still have a lower utility when agents are biased (bottom).}
        \label{fig:firm-benefit-hurt-dist}
    \end{figure}
\end{example}

The next proposition formalizes the above intuition. 

\begin{proposition}\label{prop:mismatch-actual-b} 
    Consider a loss function $l(\vx, (\vtheta, \theta_0))=-u^+\text{TP}+u^-\text{FP}$. Let the pdf of label $y$ agents' feature distribution be $f_y(\vx)$, and the number of label $y$ agents be $\alpha_0$. Let $\mathcal{H}(\vtheta, \theta_0)$ denote the set of agents that satisfy $(1-\sigma(\vtheta))\theta_0\le(\vtheta-\sigma(\vtheta)\vw(\vtheta))^T\vx$, where $\sigma(\vtheta) \coloneqq \frac{\vtheta^T\vw(\vtheta)}{\norm{\vw(\vtheta)}^2}$\footnote{Note that $\sigma(\vtheta)=\frac{\norm{\vtheta}_2}{\norm{\vw(\vtheta)}_2}\cos(\alpha)$ where $\alpha$ is the angle between the actual and perceived decision boundaries. The larger $\alpha$ is, the lower $\sigma(\vtheta)$ is, indicating a more intense bias.}, and the set of agents that attempt to game the algorithm as $\sA(\vtheta, \theta_0) = \{\vx_0: \theta_0 - B \le \vtheta^T\vx_0 < \theta_0 \}$. Denote the set of accepted (resp. rejected) agents by $(\vtheta, \theta_0)$ with $\sY(\vtheta, \theta_0)$ (resp. $\sN(\vtheta, \theta_0)$). Define the sets $\sS(\vtheta_\text{NB}, \theta_{0, \text{NB}}) \coloneqq \sA(\vtheta_\text{NB}, \theta_{0, \text{NB}})/(\sA(\vtheta_\text{NB}, \theta_{0, \text{NB}})\cap\mathcal{H}(\vtheta_\text{NB}, \theta_{0, \text{NB}}))$, $\sT_1=(\sY(\vtheta_\text{NB}, \theta_{0,\text{NB}})\cup \sA(\vtheta_\text{NB}, \theta_{0,\text{NB}}))\cap \sN(\vtheta_\text{B}, \theta_{0,\text{B}})$, and $\sT_2 = (\mathcal{H}(\vtheta_\text{B}, \theta_{0,\text{B}})\cap \sA(\vw(\vtheta_\text{B}), \theta_{0,\text{B}}))\cup ( (\sY(\vtheta_\text{B}, \theta_{0,\text{B}}) \cap \sN(\vtheta_\text{NB}, \theta_{0,\text{NB}}))/\sA(\vtheta_\text{NB}, \theta_{0,\text{NB}}) )$. Then:\\[2pt]
    \hspace*{0.1in}{\textbf{(a)}} If $\int_{x\in \sS(\vtheta_\text{NB}, \theta_{0, \text{NB}})} u^- f_0(\vx)\alpha_0 d\vx \le \int_{x\in \sS(\vtheta_\text{NB}, \theta_{0, \text{NB}})} u^+ f_1(\vx)\alpha_1 d\vx $ we can say: 
    \begin{align}\label{eq:firm-loss-comp-benefit}
        &\sL(\vw(\vtheta_\text{B}), (\vtheta_\text{B}, \theta_{0, \text{B}}))\le \sL(\vw(\vtheta_\text{NB}), (\vtheta_\text{NB}, \theta_{0, \text{NB}}))\le \sL(\vtheta_\text{NB}, (\vtheta_\text{NB}, \theta_{0, \text{NB}}))
    \end{align}
    \hspace*{0.1in}{\textbf{(b)}} If $\int_{x\in \sS(\vtheta_\text{NB}, \theta_{0, \text{NB}})} u^+ f_1(\vx)\alpha_1 d\vx \le \int_{x\in \sS(\vtheta_\text{NB}, \theta_{0, \text{NB}})} u^- f_0(\vx)\alpha_0 d\vx $ we can say: 
    \begin{align}\label{eq:firm-loss-comp-hurt}
        &\max\{\sL(\vtheta_\text{NB}, (\vtheta_\text{NB}, \theta_{0, \text{NB}})), \sL(\vw(\vtheta_\text{B}), (\vtheta_\text{B}, \theta_{0, \text{B}}))\}\le \sL(\vw(\vtheta_\text{NB}), (\vtheta_\text{NB}, \theta_{0, \text{NB}}))
    \end{align}
    \hspace*{0.1in}{\textbf{(c)}} If $\int_{\vx\in\sT_1}(-u^+f_1(\vx)\alpha_1+u^-f_0(\vx)\alpha_0)d\vx \le \int_{\vx\in\sT_2}(-u^+f_1(\vx)\alpha_1+u^-f_0(\vx)\alpha_0)d\vx$ we can say:
    \begin{align}\label{eq:firm-loss-comp-hurt-NB-B}
        &\sL(\vtheta_\text{NB}, (\vtheta_\text{NB}, \theta_{0, \text{NB}}))\le \sL(\vw(\vtheta_\text{B}), (\vtheta_\text{B}, \theta_{0, \text{B}}))\le \sL(\vw(\vtheta_\text{NB}), (\vtheta_\text{NB}, \theta_{0, \text{NB}}))
    \end{align}
\end{proposition}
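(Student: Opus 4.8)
The plan is to express every firm loss as an integral over \emph{pre-strategic} features, so that all three comparisons reduce to sign questions about one weighting function on explicit regions. Since $l(\vx,(\vtheta,\theta_0))=-u^+\text{TP}+u^-\text{FP}$ depends on $\vx$ only through the acceptance indicator $\mathbf 1(\vtheta^T\vx\ge\theta_0)$, we have
\[
\sL(\vtheta',(\vtheta,\theta_0))=\E_{\vx\sim\mathcal D(\vtheta',\theta_0)}[l(\vx,(\vtheta,\theta_0))]=\int_{\mathcal A(\vtheta';\vtheta,\theta_0)}g(\vx_0)\,d\vx_0,\qquad g(\vx_0):=-u^+f_1(\vx_0)\alpha_1+u^-f_0(\vx_0)\alpha_0,
\]
where $\mathcal A(\vtheta';\vtheta,\theta_0)$ is the set of starting points whose best response to $(\vtheta',\theta_0)$ (the solution of \eqref{eq:agent-optimization}) is accepted by $(\vtheta,\theta_0)$; because the behavioral response to $(\vtheta,\theta_0)$ is the rational response to $(\vw(\vtheta),\theta_0)$, the set $\mathcal A(\vw(\vtheta);\vtheta,\theta_0)$ is precisely the behavioral-response acceptance set. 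Every comparison in the statement is then of the form $\sL_1-\sL_2=\int_{\mathcal A_1\setminus\mathcal A_2}g\,d\vx_0-\int_{\mathcal A_2\setminus\mathcal A_1}g\,d\vx_0$. Moreover, in each part one of the two inequalities is free: $\sL(\vw(\vtheta_\text{B}),(\vtheta_\text{B},\theta_{0,\text{B}}))\le\sL(\vw(\vtheta_\text{NB}),(\vtheta_\text{NB},\theta_{0,\text{NB}}))$ holds because $(\vtheta_\text{B},\theta_{0,\text{B}})$ minimizes $(\vtheta,\theta_0)\mapsto\sL(\vw(\vtheta),(\vtheta,\theta_0))$. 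So only one acceptance-set comparison must be worked out per part.

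\emph{Parts (a) and (b).} I would start from Lemma~\ref{lemma:band-optimization}: the rational response to $(\vtheta_\text{NB},\theta_{0,\text{NB}})$ is accepted iff $\vx_0\in\sY(\vtheta_\text{NB})\cup\sA(\vtheta_\text{NB})$, while for the behavioral response Lemma~\ref{lemma:band-optimization} gives $\vx_\text{B}$ explicitly, and a short computation yields $\vtheta^T\vx_\text{B}=\vtheta^T\vx_0+\sigma(\vtheta)(\theta_0-\vw(\vtheta)^T\vx_0)$; hence $\vx_\text{B}$ clears the \emph{true} classifier iff $(\vtheta-\sigma(\vtheta)\vw(\vtheta))^T\vx_0\ge(1-\sigma(\vtheta))\theta_0$, i.e.\ iff $\vx_0\in\mathcal H(\vtheta,\theta_0)$. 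Combined with the budget bands and the elementary bound $\norm{\vtheta}_2\le1$ (valid since $\vtheta$ is a probability vector, $\norm{\vtheta}_2^2=\sum_i\theta_i^2\le(\sum_i\theta_i)^2=1$), this shows no agent outside $\sY(\vtheta_\text{NB})\cup\sA(\vtheta_\text{NB})$ is accepted after a behavioral response, so the behavioral acceptance set is $\sY(\vtheta_\text{NB})\cup(\sA(\vtheta_\text{NB})\cap\mathcal H(\vtheta_\text{NB}))$, contained in the rational one, with difference exactly $\sS(\vtheta_\text{NB},\theta_{0,\text{NB}})$. The reduction then gives
\[
\sL(\vw(\vtheta_\text{NB}),(\vtheta_\text{NB},\theta_{0,\text{NB}}))-\sL(\vtheta_\text{NB},(\vtheta_\text{NB},\theta_{0,\text{NB}}))=-\int_{\sS(\vtheta_\text{NB},\theta_{0,\text{NB}})}g\,d\vx_0=\int_{\sS(\vtheta_\text{NB},\theta_{0,\text{NB}})}\big(u^+f_1\alpha_1-u^-f_0\alpha_0\big)\,d\vx_0,
\]
and the two hypotheses of (a) and (b) are exactly the two possible comparisons between $\int_{\sS}u^+f_1\alpha_1$ and $\int_{\sS}u^-f_0\alpha_0$, each pinning down the sign of this integral and hence the ordering of $\sL(\vw(\vtheta_\text{NB}),\cdot)$ and $\sL(\vtheta_\text{NB},\cdot)$; with the free inequality this yields the chains in (a) and (b).

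\emph{Part (c).} Here I would compare the rational response to $(\vtheta_\text{NB},\theta_{0,\text{NB}})$ (acceptance set $\sY(\vtheta_\text{NB})\cup\sA(\vtheta_\text{NB})$) with the behavioral response to $(\vtheta_\text{B},\theta_{0,\text{B}})$, whose acceptance set, by the same Lemma~\ref{lemma:band-optimization} computation applied at $\vtheta_\text{B}$, is $\sY(\vtheta_\text{B})\cup(\sA(\vw(\vtheta_\text{B}))\cap\mathcal H(\vtheta_\text{B}))$ ($\sA(\vw(\vtheta_\text{B}))$ being the \emph{perceived} budget band). Writing $\sN(\vtheta_\text{B},\theta_{0,\text{B}})$ as the complement of this set gives $\mathcal A_1\setminus\mathcal A_2=\sT_1$; distributing the difference $\mathcal A_2\setminus\mathcal A_1$ over the union recovers the two pieces of $\sT_2$ — $\mathcal H(\vtheta_\text{B})\cap\sA(\vw(\vtheta_\text{B}))$ (agents who now game their way in) and $(\sY(\vtheta_\text{B})\cap\sN(\vtheta_\text{NB}))\setminus\sA(\vtheta_\text{NB})$ (agents $\vtheta_\text{B}$ accepts outright but $\vtheta_\text{NB}$ neither accepts nor lets reach the boundary). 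The reduction then yields
\[
\sL(\vtheta_\text{NB},(\vtheta_\text{NB},\theta_{0,\text{NB}}))-\sL(\vw(\vtheta_\text{B}),(\vtheta_\text{B},\theta_{0,\text{B}}))=\int_{\sT_1}\big(-u^+f_1\alpha_1+u^-f_0\alpha_0\big)d\vx-\int_{\sT_2}\big(-u^+f_1\alpha_1+u^-f_0\alpha_0\big)d\vx,
\]
which the hypothesis of (c) forces to be $\le0$; the free inequality $\sL(\vw(\vtheta_\text{B}),\cdot)\le\sL(\vw(\vtheta_\text{NB}),\cdot)$ closes the chain.

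\emph{Main obstacle.} The step I expect to be hardest is the exact set bookkeeping, above all in (c): partitioning the symmetric difference of the two acceptance regions into $\sT_1$ and $\sT_2$ requires carefully tracking agents that lie above one decision boundary but below the other, agents who behaviorally do nothing (believing they are already accepted, or that they cannot afford to reach the boundary), and agents who overshoot — and checking, via the $\sigma(\vtheta)$ identity together with $\norm{\vtheta}_2\le1$, that no further agents enter or leave (e.g.\ far-below agents overshooting into acceptance, or over-shooters being double counted), so that $\sS$, $\sT_1$, $\sT_2$ are exactly — or, where a stated set is an over-approximation, still safely — the required symmetric differences. Once the hyperplane geometry is in place (recall $\sigma(\vtheta)=(\norm{\vtheta}_2/\norm{\vw(\vtheta)}_2)\cos\alpha$ with $\alpha$ the angle between the true and perceived boundaries), matching the budget bands to the norm-$2$ cost and invoking the optimality inequalities are routine.
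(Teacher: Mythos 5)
Your proposal is correct and follows essentially the same route as the paper's proof: the optimality of $(\vtheta_\text{B},\theta_{0,\text{B}})$ supplies the ``free'' inequality in each part, your displacement computation $\vtheta^T\vx_\text{B}=\vtheta^T\vx_0+\sigma(\vtheta)(\theta_0-\vw(\vtheta)^T\vx_0)$ is exactly the paper's auxiliary lemma characterizing $\mathcal{H}$, and the remaining comparisons are reduced, as in the appendix, to the sign of $\int(-u^+f_1\alpha_1+u^-f_0\alpha_0)$ over the difference regions $\sS$, $\sT_1$, $\sT_2$. One bookkeeping caveat you partially anticipate: you take the behavioral acceptance set to be $\sY(\vtheta_\text{NB},\theta_{0,\text{NB}})\cup(\sA(\vtheta_\text{NB},\theta_{0,\text{NB}})\cap\mathcal{H}(\vtheta_\text{NB},\theta_{0,\text{NB}}))$, matching the proposition's definition of $\sS$, whereas the paper's own proof excises the triple intersection $\sA(\vtheta_\text{NB},\theta_{0,\text{NB}})\cap\sA(\vw(\vtheta_\text{NB}),\theta_{0,\text{NB}})\cap\mathcal{H}(\vtheta_\text{NB},\theta_{0,\text{NB}})$ --- the extra factor $\sA(\vw(\vtheta_\text{NB}),\theta_{0,\text{NB}})$ matters because an agent in the true band who satisfies the $\mathcal{H}$ inequality but perceives the boundary as beyond its budget does not move and is not accepted, so it must stay in the difference region; this is an inconsistency already present between the paper's statement and its appendix, and your explicit $\norm{\vtheta}_2\le 1$ argument ruling out acceptance from below the true band supplies a detail the paper leaves implicit.
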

Part (a) states that if a firm is unaware of agents' behavioral biases, it will suffer a lower loss when the population is biased compared to fully rational. This is the intuitively expected scenario (behaviorally biased agents are less adept than fully rational ones at gaming the algorithm). On the other hand, statement (b) reflects the less expected outcome: a firm unaware of behavioral biases will have \emph{lower} utility when agents are biased compared to if they had been fully rational (as more \emph{qualified} than \emph{unqualified} agents undershoot the threshold under this case's condition). Statement (c) further compares the unaware firm with an aware firm and provides a condition where an aware firm's minimal loss is higher than the non-biased minimal loss. This condition relies on the \emph{difference} of qualified and unqualified agents in two regions.

\textbf{Agents' Welfare:} We end this section by comparing the impacts of behavioral biases on agents' welfare (sum of their utilities). As a baseline, note that if the firm was oblivious to agents' strategic responses and did not adjust the classifier, agents would have lower welfare when they are behaviorally biased (compared to when rational). This is an expected outcome since behaviorally biased agents are worse at gaming the algorithm and respond sub-optimally. But, perhaps more unexpectedly, when the firm adjusts its classifier in response to agents' strategic behavior and behavioral biases, various scenarios can occur. For instance, in the bottom row of Figure~\ref{fig:firm-benefit-hurt-dist}, qualified agents have \emph{higher} social welfare when they are behaviorally biased compared to if they had been rational. We provide additional details on reasons for this in Appendix~\ref{app:welfare}.

\section{User Study}\label{sec:user-study}
\vspace{-0.1in}
\subsection{Study Design, Participants, and Setup}\label{sec:study-design}
To understand how human biases affect their responses to algorithms, we conducted a large-scale online survey where participants completed a task with an optimal solution. We then measured how their answers differed from the optimal solution to assess bias. Similar to previous sections, we focus on how participants perceive the feature weights and how this influences their response to the algorithm. Surveys generally took {5.5} minutes, and participants were paid {\$16 per hour}. The study design was reviewed by our IRB, and the full protocol is included in the supplementary materials.

\noindent\textbf{Experimental Design.} We used a $2\times2$ between-subjects design, varying the number of features and the feature weights. Participants were randomly assigned to a condition and shown a single explanation (shown in Figure~\ref{fig:feature-based-explanations}), with \textit{a}) either two or four features and \textit{b}) either balanced or unbalanced feature weights. 

\noindent\textbf{Procedure.} All participants were shown a truthful and complete explanation from an interpretable ML algorithm. Each participant was asked to complete a task based on the information provided in the explanation. After completing the task, participants were asked questions about their understanding, trust, satisfaction, and performance, common measures in explainability evaluations~\citep{mohseni2021survey}. 

\noindent\textbf{Explanation.} Based on prior work, which has found that feature importance is the most used explanation \citep{systematic2023nauta} and that visualization can help users understand explanations \citep{peeking2018adadi}, we developed an explanation that shows feature weights in a bar graph (Figure~\ref{fig:feature-based-explanations}). The system uses the displayed feature weights as an interpretable ML algorithm. 

\begin{figure}[ht]
    \centering 
    \hspace{.5em}\includegraphics[width=0.48\textwidth]{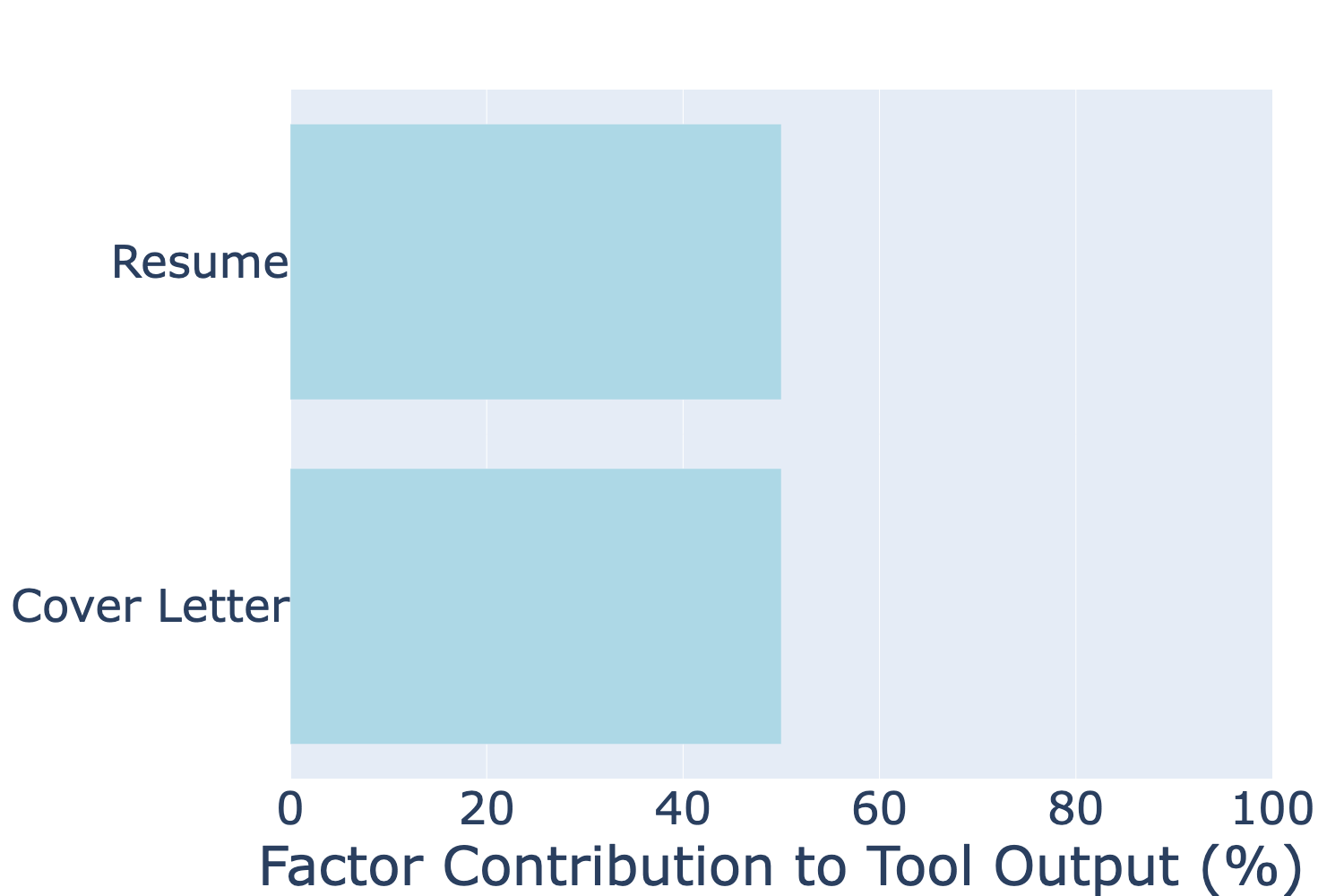}
    \hspace{.5em}\includegraphics[width=0.48\textwidth]{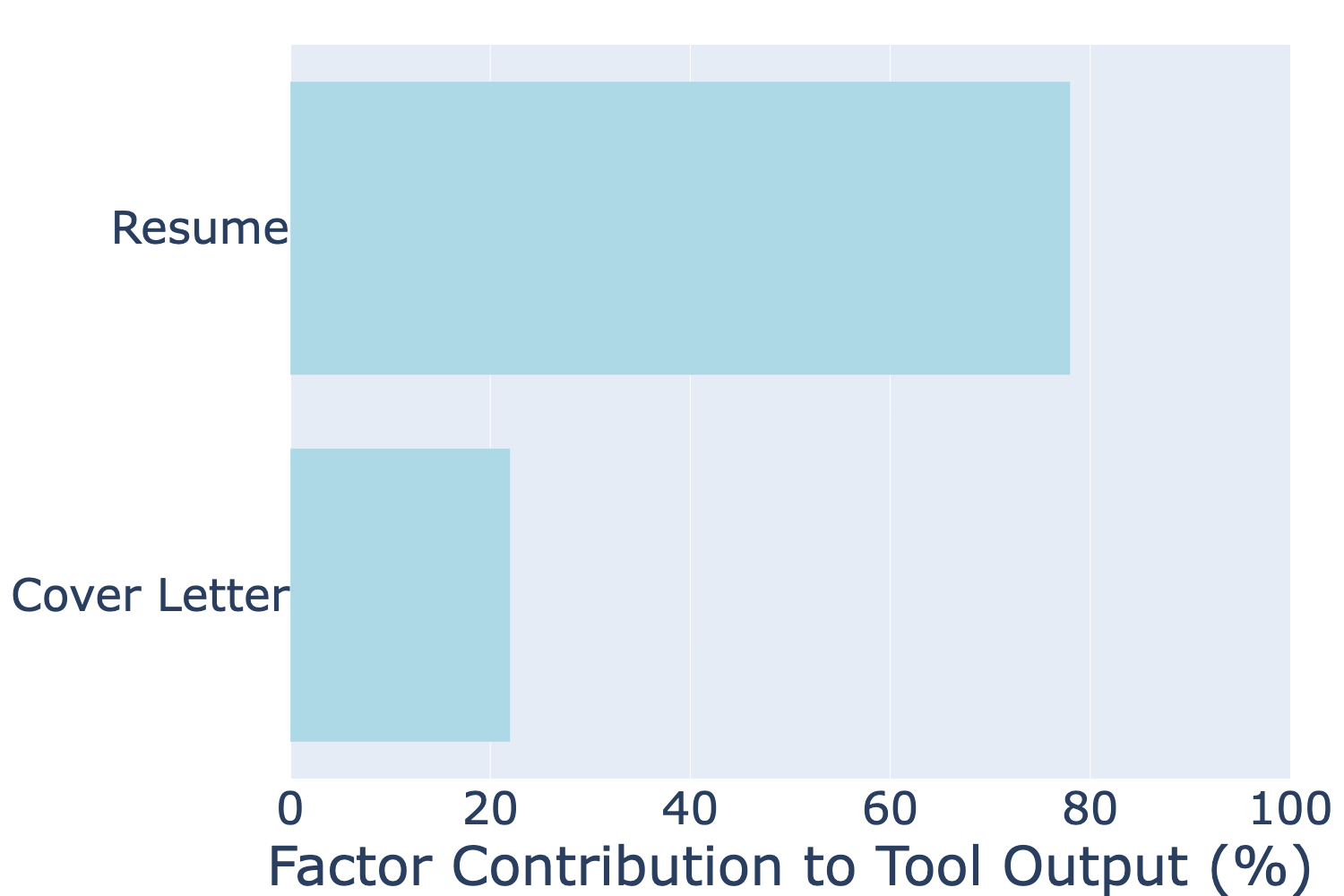}
    \includegraphics[width=0.48\textwidth]{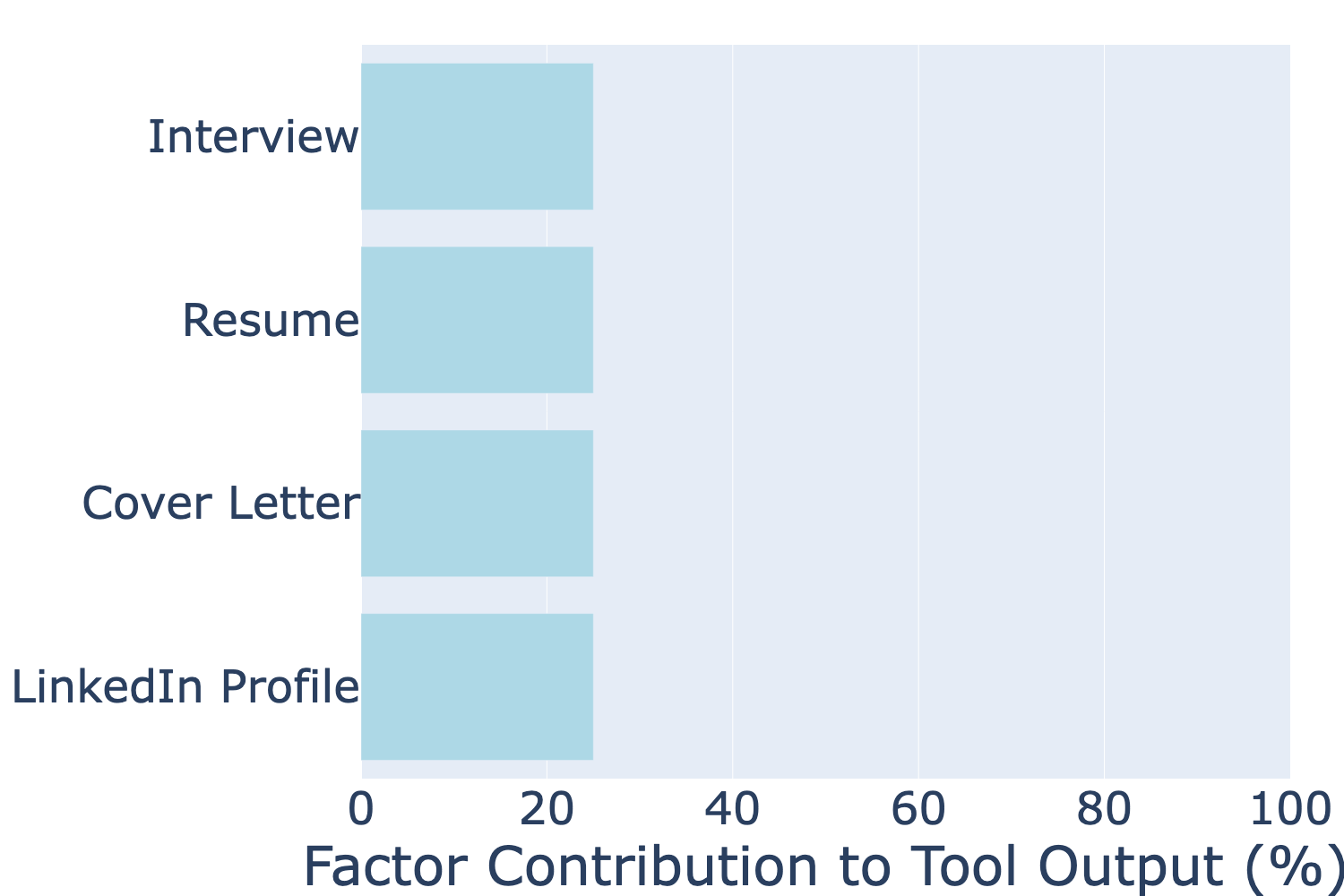}
    \includegraphics[width=0.48\textwidth]{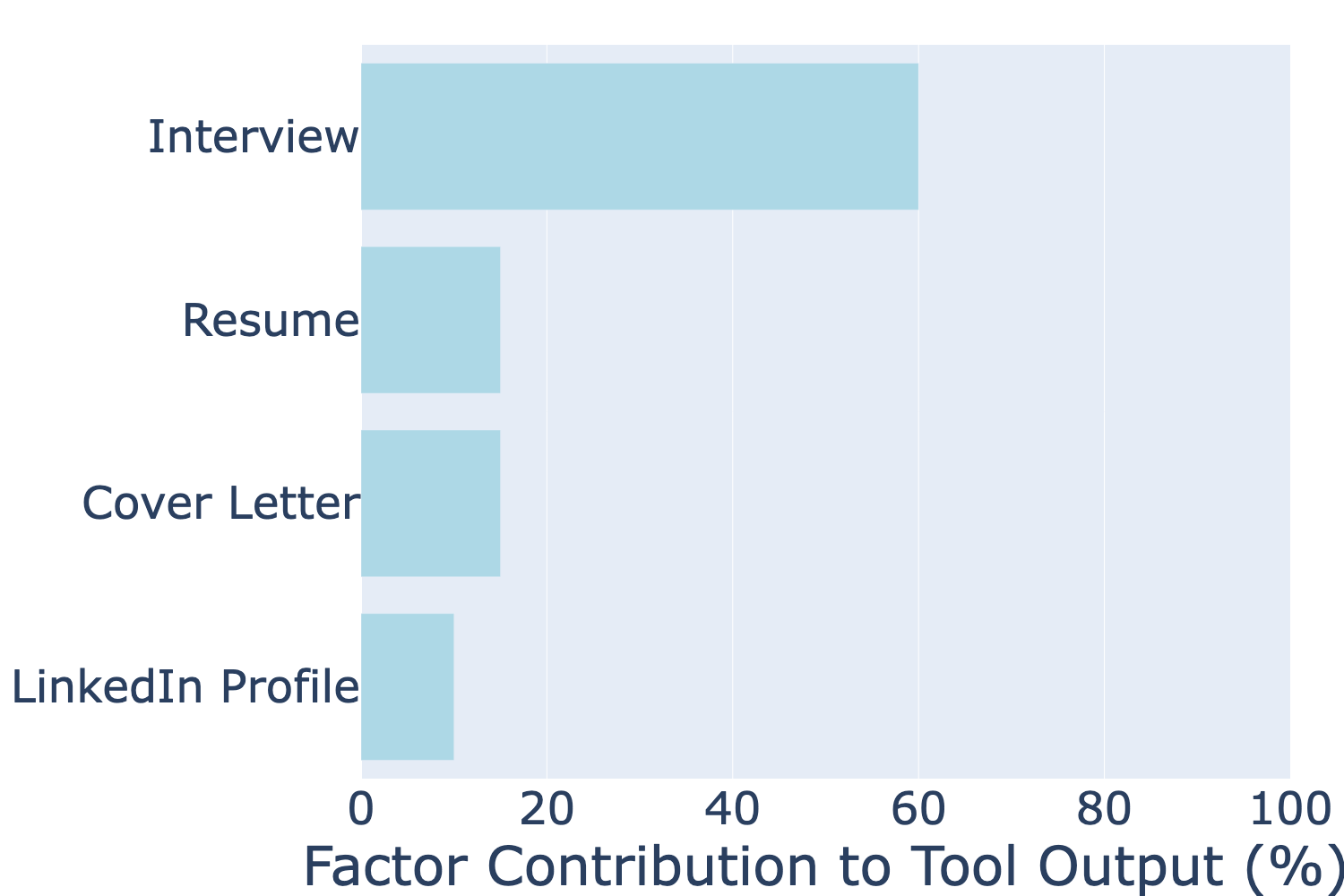}
    \caption{The scenarios shown to participants: two or four features of similar importance (top and bottom left resp.), and two or four features of differing importance (top and bottom right).}
    \label{fig:feature-based-explanations}
\end{figure}

\noindent\textbf{Task.} The task asked participants to give advice to a family member about how to prepare for the college application process. Participants were told an AI system provides predictions of college admissions. The features were resume (R) and cover letter (CL), with two additional features, interview (I) and LinkedIn profile (LP), for the four feature conditions. 

Each participant had a budget (10 hours) to allocate between the given features to improve the likelihood of acceptance recommendation. For starting features, in the two-feature and four-feature scenarios, we used $\vx_0=(40\text{ (R) }, 60\text{ (CL)})$ and $\vx_0=(60\text{ (I) }, 40\text{ (R) }, 60\text{ (CL) }, 65\text{ (LP)})$, where, the maximum for feature scores is 100. For each hour allocated to any feature, participants were given a piecewise linear cost: the feature's score improves by 5 points for the first four hours, 2.5 points for the second four hours, and 1 point for extra hours after that. 

\textbf{Correct (``optimal'') answers.} For two balanced features, one should \emph{not} invest more than 6 hours in any feature. In the scenario with two unbalanced features, the optimal investments in the resume and cover letter are 8 hours and 2 hours, respectively. For the scenario with four balanced features, the optimal is to invest at most 4 hours in any feature. For the unbalanced four features case, the optimal investment is to allocate 8 hours to the interview and the remaining 2 hours to the resume and cover letter. In this case, any investment in the LinkedIn profile feature is sub-optimal. A more detailed explanation is given in Appendix~\ref{sec:app-piece-wise-sol}. 

\noindent\textbf{Measures.} For other dependent measures, we used self-reported measurements of satisfaction, understanding, trust, and task performance using five-point semantic scales. 
We lightly edited questions from \citet{mohseni2021survey} for brevity and clarity.

\noindent\textbf{Recruitment.} We recruited 100 participants through Prolific in September 2024. 
Quotas on education level and gender ensured the sample was representative of the United States. Additionally, we gathered demographic information and assessed participants' familiarity with machine learning to ensure a representative and unbiased sample.

\subsection{Results and Discussion}\label{sec:user-study-results}
\textbf{Adding complexity reduces performance.} Our findings indicate that participant performance decreases when we increase the number of features from two to four or shift from balanced to unbalanced feature weights. We evaluate performance by comparing the total score of a response to the optimal total score for that case. The total score of each response is calculated by first determining the new feature vector $\vx$ by adding the improvements of each feature to $\vx_0$ based on the subject's response, and then calculating $\vtheta^T\vx$. 

\begin{figure}[ht]
    \centering
\includegraphics[width=0.28\textwidth]{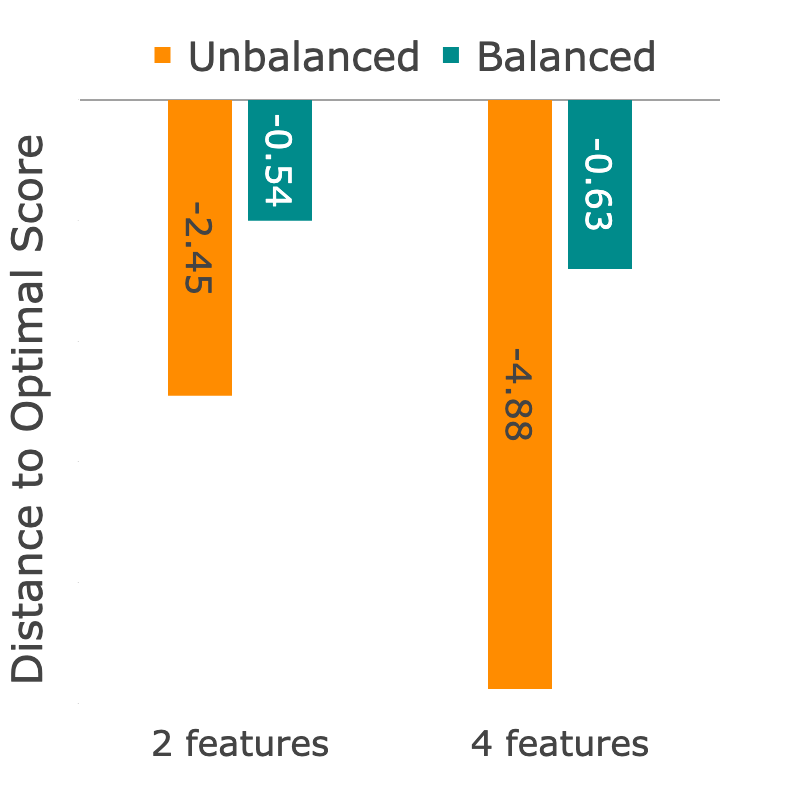}
    \caption{The average distance to optimal ($\vtheta^T\vx_\text{NB}-\vtheta^T\vx_\text{B}$) for the four scenarios.}
    \label{fig:dist-to-opt}
\end{figure}

In Figure~\ref{fig:dist-to-opt}, we observe that as the number of features increases, adding more complexity to the model, participants will move further away from the optimal score in balanced and unbalanced cases. The increase in the unbalanced case is almost double that of the balanced case. This indicates that we observe behavioral responses even in the balanced case, indicating biases beyond those in our theoretical predictions. For a fixed number of features, we see that answers are far from optimal when we have unbalanced features. Increasing the number of features will also lead to worse answers.

\begin{table}[ht]
    \caption{Number of responses in each scenario, (B) balanced and (U) unbalanced features} 
    \label{table:number-of-responses}
    \begin{center}
        \begin{tabular}{llll}
        \textbf{Scenario}  &\textbf{Opt.} &\textbf{1-feature} &\textbf{Sub-opt.} \\
        \hline \\
        2-features (B) & 21 & 0 & 6 \\
        4-features (B) & 14 & 1 & 10 \\
        2-features (U) & 5 & 3 & 16 \\
        4-features (U) & 1 & 1 & 22
        \end{tabular}
    \end{center}
\end{table}

Table~\ref{table:number-of-responses} shows that not only does the average distance from the optimal score increase with added complexity, but also the number of participants that responded sub-optimally increases. Most participants could find the optimal answers in the balanced cases, but most responded sub-optimally in the unbalanced cases. 

\textbf{Participants exhibit different behavioral biases.} The unbalanced scenarios indicate that most participants' behavior is consistent with following a Prelec function when (mis)perceiving feature importance. This leads them to under-invest in the most important feature and over-invest in the least important one. Participants not following the Prelec function tend to allocate all their budget to the most important feature.

The results from the balanced scenarios shed light on another behavioral bias: In the case of similar importance, participants invest more in the feature with a lower starting point (the resume). In the balanced scenarios, we notice that most participants respond without a behavioral bias, as predicted by the bias and Prelec functions. However, some participants responded sub-optimally, all over-investing in the resume. In the unbalanced four-feature case, the average investment in the resume is higher despite the resume and the cover letter having the same importance, indicating that this occurs for any two features with the same weight regardless of the importance of other features. 

Focusing on the unbalanced scenarios, we see that the number of participants who respond with the optimal answer drops when we increase the number of features. More participants decide to invest all their budget in the most important feature. Findings from the unbalanced two-feature scenario show that if participants do not invest all their budget in the most important feature, they under-invest in it. 

\begin{table}[ht]
    \caption{Average distance of investment in most important and least important features in unbalanced scenarios from the optimal.}\label{table:avg-dist-from-opt}
    \begin{center}
        \begin{tabular}{lll}
        \textbf{} &\textbf{Most important} &\textbf{Least important} \\
        \hline \\
        2-features & -2.13 hours & +2.13 hours \\
        4-features & -4.11 hours & +1.76 hours
        \end{tabular}
    \end{center}
\end{table}

Assuming the Prelec function, we find that $\gamma$ for the participants answering the unbalanced two-features scenario is $\gamma \le 0.64$, vs. $\gamma\le 0.55$ for four-features. (The lower the $\gamma$ in the Prelec function, the more intense the bias.) These upper bounds come from the fact that participants must underestimate the importance of the most important feature enough so that they conclude it is better to invest in the second most important feature.  

Another interesting observation in the unbalanced four-features scenario is that, even though any investment in the least important feature was sub-optimal, 18 participants still invested. This could be either a result of participants overestimating the importance of the least important feature, or a different behavioral bias, where participants prefer to invest in all options, and avoid leaving any feature as is. 
\section{Conclusion}
We present a strategic classification framework that accounts for the cognitive biases of strategic agents when assessing feature importance. We identify conditions under which the agents over- or under-invest in different features, the impacts of this on a firm's choice of classifier, and the impacts on the firm's utility and agents' welfare. Furthermore, through a user study, we support our theoretical model and results, showing that most participants respond sub-optimally when provided with an explanation of feature importance/contribution. 
{Exploring analytical models accounting for biases beyond misperception of feature weights, and exploring the possibility of designing explanation methods that can help mitigate biases, remain as important directions for further investigation.} 

\section*{Acknowledgment}
This work was supported in part by the UCSD Jacobs School Early Career Faculty Development Award. 

\bibliography{conference}

\begin{thebibliography}{}

\bibitem[Adadi and Berrada, 2018]{peeking2018adadi}
Adadi, A. and Berrada, M. (2018).
\newblock Peeking inside the black-box: A survey on explainable artificial intelligence (xai).
\newblock {\em IEEE Access}, 6:52138--52160.

\bibitem[Adebayo et~al., 2018]{adebayo2018sanity}
Adebayo, J., Gilmer, J., Muelly, M., Goodfellow, I., Hardt, M., and Kim, B. (2018).
\newblock Sanity checks for saliency maps.
\newblock In {\em Proceedings of the 32nd International Conference on Neural Information Processing Systems}, NIPS'18, page 9525–9536, Red Hook, NY, USA. Curran Associates Inc.

\bibitem[Alhanouti and Naghizadeh, 2024]{alhanouti2024could}
Alhanouti, S. and Naghizadeh, P. (2024).
\newblock Could anticipating gaming incentivize improvement in (fair) startegic classification?
\newblock {\em The IEEE Control and Decisions Conference (CDC)}.

\bibitem[Ali et~al., 2023]{ALI2023101805}
Ali, S., Abuhmed, T., El-Sappagh, S., Muhammad, K., Alonso-Moral, J.~M., Confalonieri, R., Guidotti, R., {Del Ser}, J., Díaz-Rodríguez, N., and Herrera, F. (2023).
\newblock Explainable artificial intelligence (xai): What we know and what is left to attain trustworthy artificial intelligence.
\newblock {\em Information Fusion}, 99:101805.

\bibitem[Bechavod et~al., 2021]{bechavod2021gaming}
Bechavod, Y., Ligett, K., Wu, S., and Ziani, J. (2021).
\newblock Gaming helps! learning from strategic interactions in natural dynamics.
\newblock In {\em International Conference on Artificial Intelligence and Statistics}.

\bibitem[Bechavod et~al., 2022]{bechavod2022information}
Bechavod, Y., Podimata, C., Wu, S., and Ziani, J. (2022).
\newblock Information discrepancy in strategic learning.
\newblock In {\em International Conference on Machine Learning}, pages 1691--1715. PMLR.

\bibitem[Burrell et~al., 2019]{burrell2019users}
Burrell, J., Kahn, Z., Jonas, A., and Griffin, D. (2019).
\newblock When users control the algorithms: values expressed in practices on twitter.
\newblock {\em Proceedings of the ACM on Human-Computer Interaction}, 3(CSCW):1--20.

\bibitem[Camacho and Conover, 2011]{Camacho2011manipulation}
Camacho, A. and Conover, E. (2011).
\newblock Manipulation of social program eligibility.
\newblock {\em American Economic Journal: Economic Policy}, 3(2):41–65.

\bibitem[Cohen et~al., 2024]{cohen2024bayesian}
Cohen, L., Sharifi-Malvajerdi, S., Stangl, K., Vakilian, A., and Ziani, J. (2024).
\newblock Bayesian strategic classification.
\newblock {\em arXiv preprint arXiv:2402.08758}.

\bibitem[Doshi-Velez et~al., 2017]{doshivelez2019accountabilityailawrole}
Doshi-Velez, F., Kortz, M., Budish, R., Bavitz, C., Gershman, S., O'Brien, D., Scott, K., Schieber, S., Waldo, J., Weinberger, D., et~al. (2017).
\newblock Accountability of ai under the law: The role of explanation.
\newblock {\em arXiv preprint arXiv:1711.01134}.

\bibitem[Eslami et~al., 2016]{eslami2016first}
Eslami, M., Karahalios, K., Sandvig, C., Vaccaro, K., Rickman, A., Hamilton, K., and Kirlik, A. (2016).
\newblock {First I "like" It, Then I Hide It: Folk Theories of Social Feeds}.
\newblock In {\em Proceedings of the 2016 CHI Conference on Human Factors in Computing Systems}, CHI '16, page 2371–2382.

\bibitem[Ethayarajh et~al., 2024]{ethayarajh2024ktomodelalignmentprospect}
Ethayarajh, K., Xu, W., Muennighoff, N., Jurafsky, D., and Kiela, D. (2024).
\newblock Kto: Model alignment as prospect theoretic optimization.
\newblock {\em arXiv preprint arXiv:2402.01306}.

\bibitem[Freitas, 2014]{freitas2014position}
Freitas, A.~A. (2014).
\newblock Comprehensible classification models: a position paper.
\newblock {\em SIGKDD Explor. Newsl.}, 15(1):1–10.

\bibitem[Gonzalez and Wu, 1999]{gonzalez1999shape}
Gonzalez, R. and Wu, G. (1999).
\newblock On the shape of the probability weighting function.
\newblock {\em Cognitive psychology}, 38(1):129--166.

\bibitem[Haghtalab et~al., 2024]{haghtalab2023calibratedstackelberggameslearning}
Haghtalab, N., Podimata, C., and Yang, K. (2024).
\newblock Calibrated stackelberg games: Learning optimal commitments against calibrated agents.
\newblock {\em Advances in Neural Information Processing Systems}, 36.

\bibitem[Hardt et~al., 2016]{Hardt2016strategic}
Hardt, M., Megiddo, N., Papadimitriou, C., and Wootters, M. (2016).
\newblock Strategic classification.
\newblock In {\em Proceedings of the 2016 ACM Conference on Innovations in Theoretical Computer Science}, ITCS '16, page 111–122, New York, NY, USA. Association for Computing Machinery.

\bibitem[Harris et~al., 2022]{harris2022bayesian}
Harris, K., Chen, V., Kim, J., Talwalkar, A., Heidari, H., and Wu, S.~Z. (2022).
\newblock Bayesian persuasion for algorithmic recourse.
\newblock {\em Advances in Neural Information Processing Systems}, 35:11131--11144.

\bibitem[Heidari et~al., 2021]{heidari2021perceptions}
Heidari, H., Barocas, S., Kleinberg, J., and Levy, K. (2021).
\newblock On modeling human perceptions of allocation policies with uncertain outcomes.
\newblock In {\em Proceedings of the 22nd ACM Conference on Economics and Computation}, EC '21, page 589–609, New York, NY, USA. Association for Computing Machinery.

\bibitem[Hong et~al., 2020]{hong2020humanfactors}
Hong, S.~R., Hullman, J., and Bertini, E. (2020).
\newblock Human factors in model interpretability: Industry practices, challenges, and needs.
\newblock {\em Proc. ACM Hum.-Comput. Interact.}, 4(CSCW1).

\bibitem[Hu et~al., 2019]{Hu2019disparate}
Hu, L., Immorlica, N., and Vaughan, J.~W. (2019).
\newblock The disparate effects of strategic manipulation.
\newblock In {\em Proceedings of the Conference on Fairness, Accountability, and Transparency}, FAT* '19, page 259–268, New York, NY, USA. Association for Computing Machinery.

\bibitem[Kahnemann and Tversky, 1979]{kahnemann1979prospect}
Kahnemann, D. and Tversky, A. (1979).
\newblock Prospect theory: A decision making under risk.
\newblock {\em Econometrica}, 47(2):263--291.

\bibitem[Karimi et~al., 2022]{karimi2022recoursesurvey}
Karimi, A.-H., Barthe, G., Sch\"{o}lkopf, B., and Valera, I. (2022).
\newblock A survey of algorithmic recourse: Contrastive explanations and consequential recommendations.
\newblock {\em ACM Comput. Surv.}, 55(5).

\bibitem[Karimi et~al., 2021]{karimi2021algorithmicrecourse}
Karimi, A.-H., Sch\"{o}lkopf, B., and Valera, I. (2021).
\newblock Algorithmic recourse: from counterfactual explanations to interventions.
\newblock In {\em Proceedings of the 2021 ACM Conference on Fairness, Accountability, and Transparency}, FAccT '21, page 353–362, New York, NY, USA. Association for Computing Machinery.

\bibitem[Kleinberg and Raghavan, 2020]{kleinberg202induce}
Kleinberg, J. and Raghavan, M. (2020).
\newblock How do classifiers induce agents to invest effort strategically?
\newblock {\em ACM Trans. Econ. Comput.}, 8(4).

\bibitem[Kulesza et~al., 2015]{kulesza2015principles}
Kulesza, T., Burnett, M., Wong, W.-K., and Stumpf, S. (2015).
\newblock Principles of explanatory debugging to personalize interactive machine learning.
\newblock In {\em Proceedings of the 20th International Conference on Intelligent User Interfaces}, IUI '15, page 126–137, New York, NY, USA. Association for Computing Machinery.

\bibitem[Kumar et~al., 2020]{kumar2020shapproblem}
Kumar, I.~E., Venkatasubramanian, S., Scheidegger, C., and Friedler, S.~A. (2020).
\newblock Problems with shapley-value-based explanations as feature importance measures.
\newblock In {\em Proceedings of the 37th International Conference on Machine Learning}, ICML'20. JMLR.org.

\bibitem[Lakkaraju et~al., 2016]{lakkaraju2016decisionsets}
Lakkaraju, H., Bach, S.~H., and Leskovec, J. (2016).
\newblock Interpretable decision sets: A joint framework for description and prediction.
\newblock In {\em Proceedings of the 22nd ACM SIGKDD International Conference on Knowledge Discovery and Data Mining}, KDD '16, page 1675–1684, New York, NY, USA. Association for Computing Machinery.

\bibitem[Lakkaraju and Bastani, 2020]{lakkaraju2020fool}
Lakkaraju, H. and Bastani, O. (2020).
\newblock "how do i fool you?": Manipulating user trust via misleading black box explanations.
\newblock In {\em Proceedings of the AAAI/ACM Conference on AI, Ethics, and Society}, AIES '20, page 79–85, New York, NY, USA. Association for Computing Machinery.

\bibitem[Liu et~al., 2020]{Liu2020disparateequilibria}
Liu, L.~T., Wilson, A., Haghtalab, N., Kalai, A.~T., Borgs, C., and Chayes, J. (2020).
\newblock The disparate equilibria of algorithmic decision making when individuals invest rationally.
\newblock In {\em Proceedings of the 2020 Conference on Fairness, Accountability, and Transparency}, FAT* '20, page 381–391, New York, NY, USA. Association for Computing Machinery.

\bibitem[Liu et~al., 2024]{liu2024largelanguagemodelsassume}
Liu, R., Geng, J., Peterson, J.~C., Sucholutsky, I., and Griffiths, T.~L. (2024).
\newblock Large language models assume people are more rational than we really are.
\newblock {\em arXiv preprint arXiv:2406.17055}.

\bibitem[Lundberg and Lee, 2017]{lundberg2017shap}
Lundberg, S.~M. and Lee, S.-I. (2017).
\newblock A unified approach to interpreting model predictions.
\newblock In Guyon, I., Luxburg, U.~V., Bengio, S., Wallach, H., Fergus, R., Vishwanathan, S., and Garnett, R., editors, {\em Advances in Neural Information Processing Systems}, volume~30. Curran Associates, Inc.

\bibitem[Milli et~al., 2019]{Milli2019socialcost}
Milli, S., Miller, J., Dragan, A.~D., and Hardt, M. (2019).
\newblock The social cost of strategic classification.
\newblock In {\em Proceedings of the Conference on Fairness, Accountability, and Transparency}, FAT* '19, page 230–239, New York, NY, USA. Association for Computing Machinery.

\bibitem[M{\"o}hlmann and Zalmanson, 2017]{mohlmann2017hands}
M{\"o}hlmann, M. and Zalmanson, L. (2017).
\newblock Hands on the wheel: Navigating algorithmic management and uber drivers’.
\newblock In {\em Autonomy’, in proceedings of the international conference on information systems (ICIS), Seoul South Korea}, pages 10--13.

\bibitem[Mohseni et~al., 2021]{mohseni2021survey}
Mohseni, S., Zarei, N., and Ragan, E.~D. (2021).
\newblock A multidisciplinary survey and framework for design and evaluation of explainable ai systems.
\newblock {\em ACM Trans. Interact. Intell. Syst.}, 11(3–4).

\bibitem[Morewedge et~al., 2023]{Morewedge2023bias}
Morewedge, C.~K., Mullainathan, S., Naushan, H.~F., Sunstein, C.~R., Kleinberg, J., Raghavan, M., and Ludwig, J.~O. (2023).
\newblock Human bias in algorithm design.
\newblock {\em Nature Human Behaviour}, 7(11):1822--1824.

\bibitem[Nauta et~al., 2023]{systematic2023nauta}
Nauta, M., Trienes, J., Pathak, S., Nguyen, E., Peters, M., Schmitt, Y., Schl\"{o}tterer, J., van Keulen, M., and Seifert, C. (2023).
\newblock From anecdotal evidence to quantitative evaluation methods: A systematic review on evaluating explainable ai.
\newblock {\em ACM Comput. Surv.}, 55(13s).

\bibitem[Nourani et~al., 2021]{nourani2021anchoring}
Nourani, M., Roy, C., Block, J.~E., Honeycutt, D.~R., Rahman, T., Ragan, E., and Gogate, V. (2021).
\newblock Anchoring bias affects mental model formation and user reliance in explainable ai systems.
\newblock In {\em Proceedings of the 26th International Conference on Intelligent User Interfaces}, pages 340--350.

\bibitem[Perdomo et~al., 2020]{Perdomo2020performative}
Perdomo, J.~C., Zrnic, T., Mendler-D\"{u}nner, C., and Hardt, M. (2020).
\newblock Performative prediction.
\newblock In {\em Proceedings of the 37th International Conference on Machine Learning}, ICML'20. JMLR.org.

\bibitem[Poursabzi-Sangdeh et~al., 2021]{poursabzi2021manipulating}
Poursabzi-Sangdeh, F., Goldstein, D.~G., Hofman, J.~M., Wortman~Vaughan, J.~W., and Wallach, H. (2021).
\newblock Manipulating and measuring model interpretability.
\newblock In {\em Proceedings of the 2021 CHI Conference on Human Factors in Computing Systems}, CHI '21, New York, NY, USA. Association for Computing Machinery.

\bibitem[Prelec, 1998]{Prelec1998}
Prelec, D. (1998).
\newblock The probability weighting function.
\newblock {\em Econometrica}, 66(3):497--527.

\bibitem[Rastogi et~al., 2022]{rastogi2022deciding}
Rastogi, C., Zhang, Y., Wei, D., Varshney, K.~R., Dhurandhar, A., and Tomsett, R. (2022).
\newblock Deciding fast and slow: The role of cognitive biases in ai-assisted decision-making.
\newblock {\em Proceedings of the ACM on Human-Computer Interaction}, 6(CSCW1):1--22.

\bibitem[Ribeiro et~al., 2016]{ribeiro2016lime}
Ribeiro, M.~T., Singh, S., and Guestrin, C. (2016).
\newblock "why should i trust you?": Explaining the predictions of any classifier.
\newblock In {\em Proceedings of the 22nd ACM SIGKDD International Conference on Knowledge Discovery and Data Mining}, KDD '16, page 1135–1144, New York, NY, USA. Association for Computing Machinery.

\bibitem[Sixt et~al., 2022]{sixt2022do}
Sixt, L., Schuessler, M., Popescu, O.-I., Wei{\ss}, P., and Landgraf, T. (2022).
\newblock Do users benefit from interpretable vision? a user study, baseline, and dataset.
\newblock In {\em International Conference on Learning Representations}.

\bibitem[Ustun et~al., 2019]{ustun2019recourse}
Ustun, B., Spangher, A., and Liu, Y. (2019).
\newblock Actionable recourse in linear classification.
\newblock In {\em Proceedings of the Conference on Fairness, Accountability, and Transparency}, FAT* '19, page 10–19, New York, NY, USA. Association for Computing Machinery.

\bibitem[Zhang et~al., 2022]{pmlr-v162-zhang22l}
Zhang, X., Khalili, M.~M., Jin, K., Naghizadeh, P., and Liu, M. (2022).
\newblock Fairness interventions as ({D}is){I}ncentives for strategic manipulation.
\newblock In Chaudhuri, K., Jegelka, S., Song, L., Szepesvari, C., Niu, G., and Sabato, S., editors, {\em Proceedings of the 39th International Conference on Machine Learning}, volume 162 of {\em Proceedings of Machine Learning Research}, pages 26239--26264. PMLR.

\bibitem[Zhu et~al., 2024]{zhu2024capturingcomplexityhumanstrategic}
Zhu, J.-Q., Peterson, J.~C., Enke, B., and Griffiths, T.~L. (2024).
\newblock Capturing the complexity of human strategic decision-making with machine learning.
\newblock {\em arXiv preprint arXiv:2408.07865}.

\end{thebibliography}

\clearpage

\appendix
\section{Additional related work}\label{sec:app-lit-review}
\textbf{Iterpretable Machine Learning and Explainable AI:} The interpretability of machine learning models and explainable AI is receiving more attention as it becomes more necessary for firms in various fields of work to explain an AI decision-making assistant or understand the algorithm's process for that specific decision \citep{ALI2023101805, peeking2018adadi}. Many studies have focused on providing guidelines and new objectives for the algorithm to ensure interpretability, \citet{freitas2014position} discusses the interpretability issues for five specific classification models and more generic issues of interpretability. \citet{lakkaraju2016decisionsets} provides a multi-objective optimization problem and uses model interpretability as a goal of the learning algorithm, \citet{lundberg2017shap} and \citet{ribeiro2016lime} provide methods for posthoc explanations. \citet{adebayo2018sanity} provides a method to evaluate an explanation method for image data. Many other works have also conducted user studies for finding or evaluating the guidelines using measures such as satisfaction, understanding, trust, etc. \citep{poursabzi2021manipulating, kulesza2015principles, sixt2022do}. 

\textbf{Strategic Classification:} Explanations enable users to potentially respond \citep{Camacho2011manipulation} to an algorithm to improve, meaning that they change their features to change their actual qualification or cheat, meaning that they manipulate their features to game the algorithm. This topic is extensively discussed in works such as \citet{Perdomo2020performative, Hardt2016strategic, Liu2020disparateequilibria, bechavod2021gaming, Hu2019disparate}. However, these works assume complete information on the model parameters, which is not necessarily a correct assumption. \citet{cohen2024bayesian} explores the partial information released by the firm and discusses the firm's optimization problem and agents' best response. \citet{haghtalab2023calibratedstackelberggameslearning} introduced the calibrated Stackelberg games where the agent does not have direct access to the firm's action. This can also be implemented in our framework where the firm uses $\vtheta$ but announces $\vtheta'$, and agents respond to $\vw(\vtheta')$. Another line of work called actionable recourse suggests giving actionable responses to users alongside the model explanation could be beneficial and help the users have a better outcome \citep{karimi2022recoursesurvey}. \citet{ustun2019recourse} provides an integer programming toolkit and introduces actionable recourse in linear regression. \citet{karimi2021algorithmicrecourse} introduces algorithmic recourse that allows people to act rather than understand. \citet{harris2022bayesian} combines the algorithmic recourse with partial information and has a firm that provides actionable recourse and steers agents. They show that agents and the firm are never worse off in expectation in this setting. 
\section{Proofs}\label{sec:app-proofs}

\paragraph{Proof of Lemma~\ref{lemma:band-optimization}, Lemma~\ref{lemma:quad-cost-band}, and Lemma~\ref{lemma:manhattan-cost-band}}We show the NB case, the B case can be shown similarly. We divide the agents into two subsets: (1) Agents that will attempt to optimize and (2) agents that will not attempt to optimize. The first subset is the agents that will have a non-negative utility after optimization, i.e., will have $r-c(\vx_\text{NB}, \vx_0)$. For these agents, since their reward is constant, the optimization problem comes down to:
\begin{align}
    &\vx_\text{NB} := \argmax_\vx ~ r - c(\vx, \vx_0) \notag\\
    &\text{subject to}\quad \vtheta^T\vx = \theta_0
\end{align}
And the agents that are in the second subset will solve $\vx_\text{NB} := \argmin_\vx ~ c(\vx, \vx_0)$ which is simply $\vx_\text{NB}=\vx_0$.

\textbf{Lemma~\ref{lemma:band-optimization}:} For norm-2 cost we know this is the same as finding the closest point on a hyperplane to a given point. We know the solution for this problem is to move in the direction of the normal vector of the hyperplane by $d(\vx_0, \vtheta, \theta_0)=\frac{\theta_0-\vtheta^T\vx_0}{\norm{\vtheta}_2}$. This means that the solution for the agents in the first subset is $\vx_\text{NB} = \vx_0 + d(\vx_0, \vtheta, \theta_{0})\vtheta$.

\textbf{Lemma~\ref{lemma:quad-cost-band}} The quadratic cost is similar to norm-2 cost, by directly solving the optimization problem and having $\lambda$ to be the Lagrange multiplier for the constraint we find:
\begin{align}
    x_{i, \text{NB}} = \frac{\lambda}{2}\frac{\theta_i}{c_i}+x_{i, 0} \text{ and } \frac{\lambda}{2} = \frac{\theta_0-\vtheta^T\vx_0}{\sum_j \frac{\theta_j^2}{c_j}}\Rightarrow x_{i, \text{NB}} = \frac{\theta_0-\vtheta^T\vx_0}{\sum_j\frac{\theta_j^2}{c_i}}\frac{\theta_i}{c_i}+x_{i, 0}
\end{align}
Which is, in some sense, a movement with a weighted distance from $\vx_0$ towards the hyperplane. 

\textbf{Lemma~\ref{lemma:manhattan-cost-band}}For the weighted Manhattan cost we are aiming to find the most efficient feature, i.e., the feature with the lowest $\frac{c_i}{\theta_i}$. 

\paragraph{Proof of Proposition~\ref{prop:under-invest-high-dim}}For a behavioral agent with $\vx_0$ that perceives $\theta_i$ as $\evw_i(\vtheta)$ to under-invest we need to have $\delta_i^{\text{B}}=d(\vx_0, \vw(\vtheta), \theta_0)\times \evw_i(\vtheta) < \delta_i^{\text{NB}}=d(\vx_0, \vtheta, \theta_0)\times \theta_i$, or $\frac{d(\vx_0, \vw(\vtheta), \theta_0)}{d(\vx_0, \vtheta, \theta_0)}<\frac{\theta_i}{\evw_i(\vtheta)}$. 

By knowing $\evw_i(\vtheta)<\theta_i$ then the agents with $d(\vx_0, \vw(\vtheta), \theta_0)\le d(\vx_0, \vtheta, \theta_0)$ will satisfy the condition since $\frac{d(\vx_0, \vw(\vtheta), \theta_0)}{d(\vx_0, \vtheta, \theta_0)}\le 1 < \frac{\theta_i}{\evw_i(\vtheta)}$ and under-invest in feature $i$. We can show the second statement similarly. 

The third statement of the proposition is a scenario where $\evw_1(\theta)<\theta_1$ where $\theta_1\ge \theta_i$ for all $i$, and we want to identify agents that will over-invest in that feature, i.e., $\frac{d(\vx_0, \vw(\vtheta), \theta_0)}{d(\vx_0, \vtheta, \theta_0)}>\frac{\theta_1}{\evw_1(\vtheta)}$. 

Since for the most important feature we have $\evw_1(\vtheta)=p(\theta_1)$, we can easily find the maximum of $\frac{\theta_1}{\evw_1(\vtheta)}$ for a given $\gamma$ by taking the derivative and using the function in \citet{Prelec1998}. This maximum occurs at $\theta^* = e^{-(\frac{1}{\gamma})^\frac{1}{\gamma-1}}$ meaning, $\frac{\theta_1}{\evw_1(\vtheta)}\le \frac{\theta^*}{\evw(\theta^*)} = \exp((\frac{1}{\gamma})^\frac{\gamma}{\gamma-1}-(\frac{1}{\gamma})^\frac{1}{\gamma-1})$. Therefore, using the same reasoning for the first two statements, agents with $\frac{d(\vx_0, \vw(\vtheta), \theta_0)}{d(\vx_0, \vtheta, \theta_0)}\ge \exp((\frac{1}{\gamma})^\frac{\gamma}{\gamma-1}-(\frac{1}{\gamma})^\frac{1}{\gamma-1})$ will over-invest in the most important feature, i.e., feature 1. 

\paragraph{Proof of Proposition~\ref{prop:mismatch-actual-b}}
We start the proof from the leftmost inequality in \eqref{eq:firm-loss-comp-benefit}. By the definition of $(\vtheta_\text{B}, \vtheta_{0, \text{B}})$ we can write $\E_{\vx\sim\mathcal{D}(\vw(\vtheta_\text{B}), \theta_{0, \text{B}})}[l(\vx, (\vtheta_\text{B}, \vtheta_{0, \text{B}}))]\le \E_{\vx\sim\mathcal{D}(\vw(\vtheta), \theta_{0})}[l(\vx, (\vtheta, \vtheta_0))]$ for all $(\vtheta, \vtheta_0)\neq (\vtheta_\text{B}, \vtheta_{0, \text{B}})$, i.e., $\sL((\vw(\vtheta_\text{B}), \theta_{0, \text{B}}), (\vtheta_\text{B}, \vtheta_{0, \text{B}}))\le \sL((\vw(\vtheta_\text{NB}), \theta_{0, \text{NB}}), (\vtheta_\text{NB}, \theta_{0, \text{NB}}))$ is always true. 

We next provide a characterization of the set of agents who fall within regions \framebox(7,9){1} and \framebox(7,9){3} in Figure~\ref{fig:highlighted}. These are the set of agents who will still pass the (true) decision boundary regardless of their biases. 
\begin{lemma}\label{lemma:H}
     For a given $(\vtheta, \theta_0)$, agents that satisfy $(1-\sigma(\vtheta))\theta_0\le(\vtheta-\sigma(\vtheta)\vw(\vtheta))^T\vx$, if given enough budget, will be accepted by the classifier, where $\sigma(\vtheta) \coloneqq \frac{\vtheta^T\vw(\vtheta)}{\norm{\vw(\vtheta)}^2}$ is a measure of the intensity of behavioral bias. 
\end{lemma}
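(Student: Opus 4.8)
The plan is to characterize exactly when a behaviorally biased agent, facing the true classifier $(\vtheta,\theta_0)$ but responding to the perceived $(\vw(\vtheta),\theta_0)$, will actually end up on the accepting side of the \emph{true} boundary after exerting a large (effectively unbounded) budget. By Lemma~\ref{lemma:band-optimization}, such an agent moves to $\vx_\text{B}=\vx_0 + d(\vx_0,\vw(\vtheta),\theta_0)\,\vw(\vtheta)$, where $d(\vx_0,\vw(\vtheta),\theta_0)=\frac{\theta_0-\vw(\vtheta)^T\vx_0}{\norm{\vw(\vtheta)}_2}$ is the (signed) distance that makes the agent land exactly on the \emph{perceived} boundary $\vw(\vtheta)^T\vx=\theta_0$. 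The agent is accepted by the true classifier iff $\vtheta^T\vx_\text{B}\ge\theta_0$, so the whole statement reduces to plugging in this expression and simplifying.

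Concretely, first I would write $\vtheta^T\vx_\text{B}=\vtheta^T\vx_0 + \frac{\theta_0-\vw(\vtheta)^T\vx_0}{\norm{\vw(\vtheta)}_2^2}\,\vtheta^T\vw(\vtheta)$, noting that $d$ times $\norm{\vw(\vtheta)}_2$ cancels one power of the norm. Recognizing $\sigma(\vtheta)=\frac{\vtheta^T\vw(\vtheta)}{\norm{\vw(\vtheta)}_2^2}$, this is $\vtheta^T\vx_0 + \sigma(\vtheta)\bigl(\theta_0-\vw(\vtheta)^T\vx_0\bigr)$. Setting this $\ge\theta_0$ and rearranging gives $(\vtheta-\sigma(\vtheta)\vw(\vtheta))^T\vx_0 \ge (1-\sigma(\vtheta))\theta_0$, which is precisely the defining inequality of $\mathcal{H}(\vtheta,\theta_0)$ (after relabeling $\vx_0$ as $\vx$). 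So the algebra is short and essentially forced.

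The one genuine subtlety — and the step I expect to need the most care — is the phrase ``if given enough budget.'' The claim is that these are exactly the agents whose \emph{behavioral} best response crosses the true boundary once $B$ is large enough that the behavioral move is actually executed (i.e., $0 < d(\vx_0,\vw(\vtheta),\theta_0)\le B$, or the agent already satisfies $\vw(\vtheta)^T\vx_0\ge\theta_0$ and does nothing). I would handle this by splitting into cases: (i) if the agent already perceives acceptance ($\vw(\vtheta)^T\vx_0\ge\theta_0$), then $\vx_\text{B}=\vx_0$ and the condition $\vtheta^T\vx_0\ge\theta_0$ must be checked directly — but one can verify this is consistent with the $\mathcal{H}$ inequality in that regime (here $\theta_0-\vw(\vtheta)^T\vx_0\le 0$, and the sign bookkeeping must be checked, possibly requiring $\sigma(\vtheta)\ge 0$, which holds when the angle $\alpha$ between actual and perceived boundaries is acute); (ii) if $d(\vx_0,\vw(\vtheta),\theta_0)>0$ and $B$ is large, the move is as computed above. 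I would also remark that $\sigma(\vtheta)>0$ is the natural regime (feature weights and their misperceptions are positive, and the footnote's $\cos\alpha>0$ interpretation presumes this), so the monotonicity of the rearrangement is not in doubt.

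Finally, I would note for completeness that the threshold component $\theta_0$ is held fixed throughout (the bias only distorts the \emph{direction} $\vtheta\mapsto\vw(\vtheta)$, not $\theta_0$), which is why the same $\theta_0$ appears on both sides; this is consistent with the model in Section~\ref{sec:model} where $(\vtheta_\text{B},\theta_{0,\text{B}})$ shares its threshold coordinate with the perceived classifier $(\vw(\vtheta),\theta_0)$. With that observed, the lemma follows immediately from Lemma~\ref{lemma:band-optimization} and the one-line computation above.
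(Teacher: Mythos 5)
Your proposal is correct and follows essentially the same route as the paper's proof: substitute the behavioral best response (the projection $\vx_0+\frac{\theta_0-\vw(\vtheta)^T\vx_0}{\norm{\vw(\vtheta)}^2}\vw(\vtheta)$ onto the perceived boundary) into the true acceptance condition $\vtheta^T\vx\ge\theta_0$ and rearrange to obtain $(1-\sigma(\vtheta))\theta_0\le(\vtheta-\sigma(\vtheta)\vw(\vtheta))^T\vx$. Your additional case analysis for agents who already perceive themselves as accepted, and your explicit handling of the ``enough budget'' clause, are careful refinements the paper leaves implicit, but they do not change the argument.
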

\begin{proof}
    We can write agents' behavioral response as $\vx+\Delta_\text{B}$ with $\Delta_\text{B}=\frac{\theta_0-\vw(\vtheta)^T\vx}{\norm{\vw(\vtheta)}^2}\vw(\vtheta)$ for a given $(\vtheta, \theta_0)$. Agents that will have successful manipulation are the ones satisfying $\theta_0\le \vtheta^T(\vx+\Delta_\text{B})$ which, by substituting $\Delta_\text{B}$, can be written as:
\begin{align}
    &\vtheta_0\le \frac{\theta_0-\vw(\vtheta)^T\vx}{\norm{\vw(\vtheta)}^2}\vtheta^T\vw(\vtheta)+\vtheta^T\vx = \frac{\vtheta^T\vw(\vtheta)}{\norm{\vw(\vtheta)}^2}\theta_0+\bigg( \vtheta - \frac{\vtheta^T\vw(\vtheta)}{\norm{\vw(\vtheta)}^2} \vw(\vtheta) \bigg)^T \vx \notag \\
    &\Rightarrow(1-\sigma(\vtheta))\theta_0 \le (\vtheta-\sigma(\vtheta)\vw(\vtheta))^T\vx
\end{align}
    Where we defined $\sigma(\vtheta)\coloneqq\frac{\vtheta^T\vw(\vtheta)}{\norm{\vw(\vtheta}^2}$.
\end{proof}

To compare the firm's loss after biased and non-biased responses, we can break the feature space into the following regions ($\1(\cdot)$ is the indicator function):
\begin{enumerate}[label=\large\protect\textcircled{\small\arabic*}]
    \item $\1(\vtheta_\text{NB}^T\vx\ge\theta_{0, \text{NB}})$
    \item $\1(\vtheta_\text{NB}^T\vx\le\theta_{0, \text{NB}}-B)$
    \item $\1(\theta_{0, \text{NB}}-B\le\vtheta_\text{NB}^T\vx\le\theta_{0, \text{NB}})\1(\theta_{0, \text{NB}}-B\le\vw(\vtheta_\text{NB})^T\vx\le\theta_{0, \text{NB}}) \equiv \sA(\vtheta_\text{NB}, \theta_{0, \text{NB}})\cap \sA(\vw(\vtheta_\text{NB}), \theta_{0, \text{NB}})$
    \item $\1(\theta_{0, \text{NB}}-B\le\vtheta_\text{NB}^T\vx\le\theta_{0, \text{NB}})\1(\vw(\vtheta_\text{NB})^T\vx\ge\theta_{0, \text{NB}})$
    \item $\1(\theta_{0, \text{NB}}-B\le\vtheta_\text{NB}^T\vx\le\theta_{0, \text{NB}})\1(\vw(\vtheta_\text{NB})^T\vx\le\theta_{0, \text{NB}}-B)$
\end{enumerate}

We know that for $\vx\in {\Circled{1}}$ and $\vx\in\Circled{2}$, the expected loss for both response scenarios is the same since the agents in the two regions are either already qualified or will never make it to the decision boundary. Therefore, to compare the expected loss for two scenarios we would need to look at the differences in the rest of the regions. 

For $\vx\in\Circled{4}$ and $\vx\in\Circled{5}$ and biased responses, the expected loss would be the same as the non-strategic case. For $\vx\in\Circled{4}$ and $\vx\in\Circled{5}$ and the non-biased case, it could be higher or lower. For $\vx\in\Circled{3}$, the firm will have a lower (resp. higher) expected loss in the biased responses scenario if the truly unqualified agents are (resp. not) more than truly qualified agents. We furthermore focus on a subset of the region $\Circled{3}$ identified by Lemma~\ref{lemma:H}, region $\Circled{3a}$, which is the biased agents that will pass the threshold despite being biased. If we define the region identified by Lemma~\ref{lemma:H} by $\mathcal{H}(\vtheta_\text{NB}, \theta_{0, \text{NB}})$, then region $\Circled{3a}$ will be $\sA(\vtheta_\text{NB}, \theta_{0, \text{NB}})\cap \sA(\vw(\vtheta_\text{NB}), \theta_{0, \text{NB}}) \cap\mathcal{H}(\vtheta_\text{NB}, \theta_{0, \text{NB}})$. 

For a setting where the loss function rewards true positives and penalizes false positives as $-u^+ TP + u^- FP$, as higher loss is worse as we defined, we can write the following:
\begin{align}\label{eq:regions_NB}
    &\sL(\vtheta_\text{NB}, (\vtheta_\text{NB}, \theta_{0, \text{NB}}))=\mL_{\Circled{1}\cup\Circled{2}} + \int_{\vx\in\Circled{3}\cup\Circled{4}\cup\Circled{5}} \big( -u^+ p(\hat{y}=1 | \vx, y)f_1(\vx)\alpha_1 + u^- p(\hat{y}=1 | \vx, y)f_0(\vx)\alpha_0 \big) d\vx \\
    &\sL(\vw(\vtheta_\text{NB}), (\vtheta_\text{NB}, \theta_{0, \text{NB}}))=\mL_{\Circled{1}\cup\Circled{2}} + \int_{\vx\in\Circled{3a}} \big( -u^+ p(\hat{y}=1 | \vx, y)f_1(\vx)\alpha_1 + u^- p(\hat{y}=1 | \vx, y)f_0(\vx)\alpha_0 \big ) d\vx\label{eq:regions_B}
\end{align}

Where $\mL_{\Circled{1}\cup\Circled{2}}$ is the loss coming from regions $\Circled{1}$ and $\Circled{2}$ which is present in both scenarios. For $\sL(\vtheta_\text{NB}, (\vtheta_\text{NB}, \theta_{0, \text{NB}}))$, we know all the agents in $\Circled{3}\cup\Circled{4}\cup\Circled{5}$ will be accepted, i.e., $p(\hat{y}=1 | \vx\in\Circled{3}\cup\Circled{4}\cup\Circled{5}, y)=1$. Similar for $\sL(\vw(\vtheta_\text{NB}), (\vtheta_\text{NB}, \theta_{0, \text{NB}}))$ and $\vx\in\Circled{3a}$. 

We can see from \eqref{eq:regions_NB} and \eqref{eq:regions_B} that depending on the density of label 0 and label 1 agents in the region $\Circled{3a}$ and comparing it to the region $\Circled{3}\cup\Circled{4}\cup\Circled{5}$ we can have both $\sL(\vw(\vtheta_\text{NB}), (\vtheta_\text{NB}, \theta_{0, \text{NB}}))\le \sL(\vtheta_\text{NB}, (\vtheta_\text{NB}, \theta_{0, \text{NB}}))$ and $\sL(\vtheta_\text{NB}, (\vtheta_\text{NB}, \theta_{0, \text{NB}}))\le \sL(\vw(\vtheta_\text{NB}), (\vtheta_\text{NB}, \theta_{0, \text{NB}}))$ occur. The difference in expected loss lies in the region $\Circled{3}\cup\Circled{4}\cup\Circled{5}-\Circled{3a}$, or equivalently $\sS(\vtheta_\text{NB}, \theta_{0, \text{NB}}) \coloneqq \sA(\vtheta_\text{NB}, \theta_{0, \text{NB}})/(\sA(\vtheta_\text{NB}, \theta_{0, \text{NB}})\cap \sA(\vw(\vtheta_\text{NB}), \theta_{0, \text{NB}}) \cap\mathcal{H}(\vtheta_\text{NB}, \theta_{0, \text{NB}}))$, we can write the following for $\sL(\vtheta_\text{NB}, (\vtheta_\text{NB}, \theta_{0, \text{NB}}))-\sL(\vw(\vtheta_\text{NB}), (\vtheta_\text{NB}, \theta_{0, \text{NB}}))\le 0$ (resp. $\ge 0$):
\begin{align}
    \int_{\vx\in\sS(\vtheta_\text{NB}, \theta_{0, \text{NB}})}(-u^+f_1(\vx)\alpha_1+u^-f_0(\vx)\alpha_0)dx \le 0 \text{ (resp. $\ge$ 0)}
\end{align}

Therefore, if the density of unqualified agents is higher (resp.~lower) than the density of qualified agents over the region $\sA(\vtheta_\text{NB}, \theta_{0, \text{NB}})/(\sA(\vtheta_\text{NB}, \theta_{0, \text{NB}})\cap \sA(\vw(\vtheta_\text{NB}), \theta_{0, \text{NB}}) \cap\mathcal{H}(\vtheta_\text{NB}, \theta_{0, \text{NB}}))$, then:
\begin{align*}
    \sL(\vw(\vtheta_\text{NB}), (\vtheta_\text{NB}, \theta_{0, \text{NB}}))\le \sL(\vtheta_\text{NB}, (\vtheta_\text{NB}, \theta_{0, \text{NB}})) \quad (\text{resp. } \sL(\vtheta_\text{NB}, (\vtheta_\text{NB}, \theta_{0, \text{NB}}))\le \sL(\vw(\vtheta_\text{NB}), (\vtheta_\text{NB}, \theta_{0, \text{NB}})))
\end{align*}

To show the last statement of the proposition, we need to compare $\sL(\vw(\vtheta_\text{NB}), (\vtheta_\text{NB}, \theta_{0, \text{NB}}))$ and $\sL(\vw(\vtheta_\text{B}), (\vtheta_\text{B}, \theta_{0, \text{B}})))$ directly. The difference between these two losses comes from the region where agents will be accepted by $(\vtheta_\text{NB}, \theta_{0, \text{NB}})$ and not by $(\vtheta_\text{B}, \theta_{0, \text{B}})$, and vice versa, after agents' response. Mathematically, for agents responding to $(\vtheta_\text{NB}, \theta_{0, \text{NB}})$ without bias, we can show the agents accepted by $(\vtheta_\text{NB}, \theta_{0, \text{NB}})$ by $\sY(\vtheta_\text{NB}, \theta_{0,\text{NB}})\cup \sA(\vtheta_\text{NB}, \theta_{0,\text{NB}})$. We want the intersection of this set with the agents not accepted by $(\vtheta_\text{B}, \theta_{0, \text{B}})$, which brings us to $\sT_1=(\sY(\vtheta_\text{NB}, \theta_{0,\text{NB}})\cup \sA(\vtheta_\text{NB}, \theta_{0,\text{NB}}))\cap \sN(\vtheta_\text{B}, \theta_{0,\text{B}})$. 

Similarly, for agents responding to $(\vtheta_\text{NB}, \theta_{0, \text{NB}})$ with bias, we can show the agents accepted by $(\vtheta_\text{B}, \theta_{0, \text{B}})$ and not by $(\vtheta_\text{NB}, \theta_{0, \text{NB}})$ by $(\sY(\vtheta_\text{B}, \theta_{0,\text{B}}) \cap \sN(\vtheta_\text{NB}, \theta_{0,\text{NB}}))/\sA(\vtheta_\text{NB}, \theta_{0,\text{NB}})$. However, in this scenario, we need to also account for agents that make it past the actual decision boundary despite being behavioral, i.e., agents in the region $\mathcal{H}(\vtheta_\text{B}, \theta_{0,\text{B}})\cap \sA(\vw(\vtheta_\text{B}), \theta_{0,\text{B}})$, bringing us to $\sT_2 = (\mathcal{H}(\vtheta_\text{B}, \theta_{0,\text{B}})\cap \sA(\vw(\vtheta_\text{B}), \theta_{0,\text{B}}))\cup ( (\sY(\vtheta_\text{B}, \theta_{0,\text{B}}) \cap \sN(\vtheta_\text{NB}, \theta_{0,\text{NB}}))/\sA(\vtheta_\text{NB}, \theta_{0,\text{NB}}) )$. 

We need the total loss from region $\sT_1$ to be lower than the total loss from the region $\sT_2$ in the two scenarios for $\sL(\vtheta_\text{NB}, (\vtheta_\text{NB}, \theta_{0, \text{NB}}))\le \sL(\vw(\vtheta_\text{B}), (\vtheta_\text{B}, \theta_{0, \text{B}}))$ to be true. Meaning that we need $\int_{\vx\in\sT_1}(-u^+f_1(\vx)\alpha_1+u^-f_0(\vx)\alpha_0)d\vx \le \int_{\vx\in\sT_2}(-u^+f_1(\vx)\alpha_1+u^-f_0(\vx)\alpha_0)d\vx$ to be true for $\sL(\vtheta_\text{NB}, (\vtheta_\text{NB}, \theta_{0, \text{NB}}))\le \sL(\vw(\vtheta_\text{B}), (\vtheta_\text{B}, \theta_{0, \text{B}}))$, and the last inequality of the statement comes from the optimality condition. 
\section{Details of Numerical Experiments}\label{sec:app-numerical-details}
\paragraph{Details for Example~\ref{ex:firm-benefit-hurt} and Figure~\ref{fig:firm-benefit-hurt-dist}}For the scenario where the firm is negatively affected by the biased response is Example~\ref{ex:firm-benefit-hurt} we used $\vmu_1^T=(2, 4)$ and $\vmu_0^T=(2, 3)$ with $\Sigma_1=\begin{psmallmatrix}0.5 & 0 \\ 0 & 0.5 \end{psmallmatrix}$ and $\Sigma_0=\begin{psmallmatrix}1 & 0.5 \\ 0.5 & 1 \end{psmallmatrix}$, and we multiplied the generated data by 10. For the scenario where the firm benefits from agents' biased response we let $\vmu_1^T=(3, 5)$ and let the rest of the parameters be the same as the first scenario, i.e., $\vmu_0^T=(2, 3)$ with $\Sigma_1=\begin{psmallmatrix}0.5 & 0 \\ 0 & 0.5 \end{psmallmatrix}$ and $\Sigma_0=\begin{psmallmatrix}1 & 0.5 \\ 0.5 & 1 \end{psmallmatrix}$, and we multiplied the generated data by 10. In both scenarios, we let $B=5$. 

We used the Prelec function described in Section~\ref{sec:model} for the behavioral response. Solving the optimization problem takes a considerable amount of time for a large number of data points, here $20,000$, so we used the equivalent of the optimization problem for agents' movement and dictated the movement straight to each data point instead of solving the optimization.

To model agents' behavioral responses, we first identified the agents that would attempt to manipulate their features. Then, we used the movement function with the specified mode, either ``B'' or ``NB'', to move the data points and create a new dataset for post-response. 

For the last row of Figure~\ref{fig:firm-benefit-hurt-dist} we used $\vmu_1^T=(4, 4)$ and $\vmu_0^T=(2, 3)$ with $\Sigma_1=\begin{psmallmatrix}1 & 0 \\ 0 & 1 \end{psmallmatrix}$ and $\Sigma_0=\begin{psmallmatrix}3 & 0 \\ 0 & 1 \end{psmallmatrix}$, and we multiplied the generated data by 10. We used $B=10$.

\paragraph{Details for Figure~\ref{fig:BR-illustration}, Figure~\ref{fig:BR-illustration-quad-cost}, and Figure~\ref{fig:BR-illustration-lin-cost}} We generated 150 data points using different distributions for each feature. Feature 1 was sampled from $\mathcal{N}(700, 200)-\mathcal{D}((0, 20, 50, 100),(0.6, 0.2, 0.1, 0.1))$ where the second term is a discrete distribution selecting 0 with $p=0.6$, 20 with $p=0.2$, 50 with $0.1$, and 100 with $p=0.1$. Feature 2 was sampled from $1500-\Gamma(4, 100)$. We used a $Score$ column to label each individual for later. The score was calculated from the feature weights $(0.65, 0.35)$. We then used a sigmoid function to assign approval probability and label the sampled data points: $\frac{1}{1+\exp(-0.8\times (\frac{x}{10}-80))}$. We assigned the labels using the calculated approval probability and a random number generator. After generating the dataset, we used two copies, one for behavioral response and one for non-behavioral response. 

In Figure~\ref{fig:BR-illustration}, for agents' response to the algorithm, we calculated the agents that can afford the response with a budget of $B=100$ and performed an optimization problem on only those agents. We solved a cost minimization problem for each agent in the band specified by Lemma~\ref{lemma:band-optimization}: $\argmin_\vx cost=\norm{\vx-\vx_0}_2$ s.t. $\vtheta^T\vx\ge\theta_0$. For the behavioral case, we used $\gamma=0.5$, and the optimization problem $\argmin_\vx cost=\norm{\vx-\vx_0}_2$ s.t. $\vw(\vtheta)^T\vx\ge\theta_0$. 

In Figure~\ref{fig:BR-illustration-quad-cost}, for agents' response to the algorithm, we calculated the agents that can afford the response with a budget of $B=100$ and performed an optimization problem on only those agents. We solved a cost minimization problem for each agent in the band specified by Lemma~\ref{lemma:quad-cost-band}: $\argmin_\vx cost=\sum_i c_i(x_i-x_{0, i})^2$ s.t. $\vtheta^T\vx\ge\theta_0$. For the behavioral case, we used $\gamma=0.5$, and the optimization problem $\argmin_\vx cost=\sum_i c_i(x_i-x_{0, i})^2$ s.t. $\vw(\vtheta)^T\vx\ge\theta_0$. 

In Figure~\ref{fig:BR-illustration-lin-cost}, for agents' response to the algorithm, we calculated the agents that can afford the response with a budget of $B=100$ and performed an optimization problem on only those agents. We solved a cost minimization problem for each agent in the band specified by Lemma~\ref{lemma:manhattan-cost-band}: $\argmin_\vx cost=\vc^T|\vx-\vx_0|$ s.t. $\vtheta^T\vx\ge\theta_0$. For the behavioral case, we used $\gamma=0.5$, and the optimization problem $\argmin_\vx cost=\vc^T|\vx-\vx_0|$ s.t. $\vw(\vtheta)^T\vx\ge\theta_0$. 
\section{Agents' Welfare}\label{app:welfare}
Figure~\ref{fig:SW-regions} highlights the change in utility when agents are behaviorally biased (vs. when they were rational) across different regions in the feature space, with the regions generated based on the firm's optimal choice of threshold and agents' responses to it. In particular, the utility of agents in the green-highlighted region (this is $\sY(\vtheta_\text{B}, \theta_{0,\text{B}}) \cap \sN(\vtheta_\text{NB}, \theta_{0,\text{NB}})$ in Proposition~\ref{prop:mismatch-actual-b}) increases when they are behaviorally biased. One subset of agents in this region are those 
who in the rational case exert effort to get admitted and have a utility $r-c(\vx,\vx_0)$, whereas in the behaviorally biased case they attain utility $r > r-c(\vx,\vx_0)$ as they get admitted without any effort (and they correctly assume so). Another one is 
the subset of agents who would not try to get to the decision boundary in the rational case (and so have utility of $0$), but in the behavioral case, they are receiving utility $r$ without any movement and due to the change of the decision boundary. For the numerical example in the bottom row of Figure~\ref{fig:firm-benefit-hurt-dist}, there are more agents in this green-highlighted region than in the remaining red-highlighted regions (where biased agents have lower utility than rational agents), leading to an overall higher welfare for all agents when they are biased compared to when they were rational. 

\begin{figure}[ht]
    \centering
    \vspace{-0.1in}
    \includegraphics[width=0.4\linewidth]{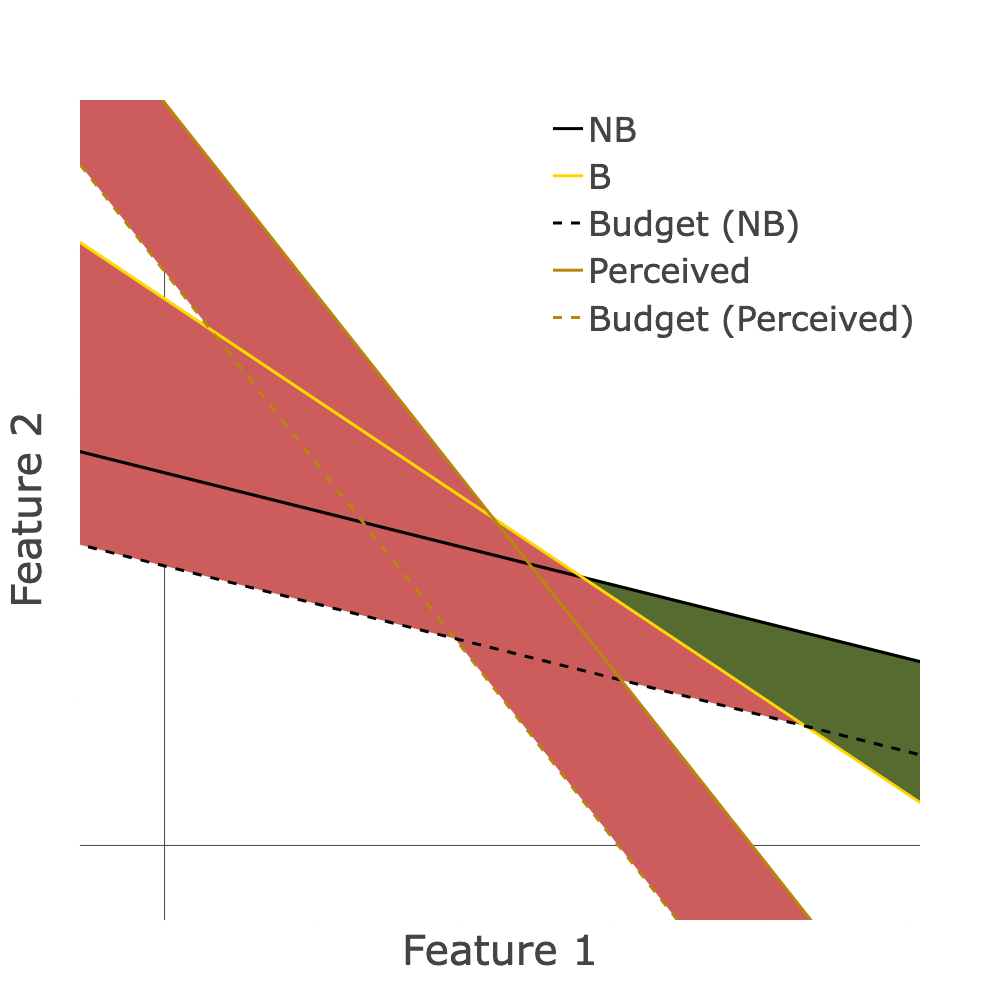}
    \caption{Regions where agents have higher (green) or lower (red) utility when biased vs. when rational.}
    \label{fig:SW-regions}
    \vspace{-0.1in}
\end{figure}
\section{Piece-wise Cost Function Solution}\label{sec:app-piece-wise-sol}
Consider a setting similar to the piece-wise cost function described. To decide the feature to spend $B_1$ of the budget, we are comparing $\frac{c_1}{\theta_1}$, $\frac{c_1}{\theta_2}$, and $\frac{c_1}{\theta_3}$ as they all have the same cost for the first step of the budget. Without loss of generality imagine we have $\frac{c_1}{\theta_1} < \frac{c_1}{\theta_2} < \frac{c_1}{\theta_3}$, therefore, we choose to allocate the $B_1$ amount of our budget to the first feature. For $B_2$, we do a similar comparison but we have to use $c_2$ for the first feature since the first feature is now in the second step, i.e., we compare $\frac{c_2}{\theta_1}$, $\frac{c_1}{\theta_2}$, and $\frac{c_1}{\theta_3}$. This could lead to resulting in investing in another feature, for example, if we have $\frac{c_1}{\theta_2} < \frac{c_2}{\theta_1} < \frac{c_1}{\theta_3}$, we would choose the second feature and invest $B_2$ in that feature. We continue this reasoning until we have reached the boundary. We designed our user study so the participants did not need to calculate if they reached the decision boundary and had all participants spend all their budgets. 

As seen in Figure~\ref{fig:2d-approx-illustration}, the quadratic cost movement differs from norm-2 movement, which moves the point to the closest point on the decision boundary. The piece-wise function we use for our user study is similar to a quadratic cost function with $c_2=0.85c_1$ and a decision boundary $0.78x_1+0.22x_2=70$ for the two-dimensional case. 


\vfill



\end{document}